\def\eqref#1{(\ref{#1})}
\def\1{\bm{1}}
\def\eps{{\varepsilon}}
\DeclareMathAlphabet{\mathsfit}{\encodingdefault}{\sfdefault}{m}{sl}
\SetMathAlphabet{\mathsfit}{bold}{\encodingdefault}{\sfdefault}{bx}{n}
\newcommand{\R}{\mathbb{R}}
\newtheorem{definition}{Definition}[section]
\newtheorem{theorem}[definition]{Theorem}
\newtheorem{lemma}[definition]{Lemma}
\newtheorem{remark}[definition]{Remark}
\newtheorem{corollary}[definition]{Corollary}
\newtheorem{proposition}[definition]{Proposition}
\newcommand{\be}{\begin{equation}}
\newcommand{\ee}{\end{equation}}
\newcommand{\bea}{\begin{equation*}\begin{aligned}}
\newcommand{\eea}{\end{aligned}\end{equation*}}
\newcommand{\ds}{\displaystyle}
\newcommand{\Max}{\max\limits_}
\newcommand{\Min}{\min\limits_}
\newcommand{\Sup}{\sup\limits_}
\newcommand{\Inf}{\inf\limits_}
\newcommand{\wh}{\widehat}
\newcommand{\mc}{\mathcal}
\newcommand{\mbb}{\mathbb}
\newcommand{\cov}{\Sigma} 
\newcommand{\covsa}{\wh{\cov}}
\newcommand{\p}{\mbb P}
\newcommand{\PP}{\mbb P}
\newcommand{\QQ}{\mbb Q}
\newcommand{\Tr}[1]{\Trace \big[ #1 \big]}
\DeclareMathOperator{\Trace}{Tr}
\DeclareMathOperator{\st}{s.t.}
\newcommand{\PSD}{\mathbb{S}_{+}} 
\newcommand{\PD}{\mathbb{S}_{++}} 
\newcommand{\Let}{\triangleq}
\newcommand{\opt}{^\star}
\newcommand{\m}{\mu}
\newcommand{\msa}{\wh \mu}
\newcommand{\half}{\frac{1}{2}}
\DeclareMathOperator{\VaR}{VaR}
\newcommand{\Pnom}{\wh \p}
\newcommand{\add}{\mathrm{add}}
\newcommand*{\affaddr}[1]{#1} 
\newcommand*{\affmark}[1][*]{\textsuperscript{#1}}
\begin{document}

\title{Distributionally Robust Recourse Action}

\author{%
Duy Nguyen\affmark[1], Ngoc Bui\affmark[1], Viet Anh Nguyen\affmark[2]\\
\affaddr{\affmark[1]VinAI Research, Vietnam}\\
\affaddr{\affmark[2]The Chinese University of Hong Kong}\\
}
\maketitle

\begin{abstract}
   A recourse action aims to explain a particular algorithmic decision by showing one specific way in which the instance could be modified to receive an alternate outcome. Existing recourse generation methods often assume that the machine learning model does not change over time. However, this assumption does not always hold in practice because of data distribution shifts, and in this case, the recourse action may become invalid. To redress this shortcoming, we propose the Distributionally Robust Recourse Action (DiRRAc) framework, which generates a recourse action that has a high probability of being valid under a mixture of model shifts. We formulate the robustified recourse setup as a min-max optimization problem, where the max problem is specified by Gelbrich distance over an ambiguity set around the distribution of model parameters. Then we suggest a projected gradient descent algorithm to find a robust recourse according to the min-max objective. We show that our DiRRAc framework can be extended to hedge against the misspecification of the mixture weights. Numerical experiments with both synthetic and three real-world datasets demonstrate the benefits of our proposed framework over state-of-the-art recourse methods.
\end{abstract}



\section{Introduction} \label{sec:intro}

Post-hoc explanations of machine learning models are useful for understanding and making reliable predictions in consequential domains such as loan approvals, college admission, and healthcare. Recently, recourse has been rising as an attractive tool to diagnose why machine learning models have made a particular decision for a given instance. A recourse action provides a possible modification of the given instance to receive an alternate decision~\citep{ref:ustun2019actionable}. Consider, for example, the case of loan approvals in which a credit application is rejected. Recourse will offer the reasons for rejection by showing what the application package should have been to get approved. A concrete example of a recourse in this case may be ``the monthly salary should be higher by \$500" or ``20\% of the current debt should be reduced".

A recourse action has a positive, forward-looking meaning: they list out a directive modification that a person should implement so that they can get a more favorable outcome in the future. If a machine learning system can provide the negative outcomes with the corresponding recourse action, it can improve user engagement and boost the interpretability at the same time~\citep{ref:ustun2019actionable, ref:karimi2021survey}. Explanations thus play a central role in the future development of human-computer interaction as well as human-centric machine learning.

Despite its attractiveness, providing recourse for the negative instances is not a trivial task. For real-world implementation, designing a recourse needs to strike an intricate balance between conflicting criteria. First and foremost, a recourse action should be feasible: if the prescribed action is taken, then the prediction of a machine learning model should be flipped. Further, to avoid making a drastic change to the characteristics of the input instance, a framework for generating recourse should minimize the cost of implementing the recourse action. An algorithm for finding recourse must make changes to only features that are actionable and should leave immutable features (relatively) unchanged. For example, we must consider the date of birth as an immutable feature; in contrast, we can consider salary or debt amount as actionable features.

Various solutions have been proposed to provide recourses for a model prediction~\citep{ref:karimi2021survey, steplin2021survey, ref:andre2019computation, ref:pawelczyk2021carla, ref:pawelczyk2020Once, ref:verma2022counterfactual}. For instance, \citet{ref:ustun2019actionable} used an integer programming approach to obtain actionable recourses and also provide a feasibility guarantee for linear models. \citet{ref:karimi2020modelagnostic} proposed a model-agnostic approach to generate the nearest counterfactual explanations and focus on structured data. \citet{ref:dandl2020multi} proposed a method that finds the counterfactual by solving a multi-objective optimization problem. Recently, \citet{ref:russell2019efficient} and \citet{ref:mothilal2020explaining} focus on finding a set of multiple diverse recourse actions, where the diversity is imposed by a rule-based approach or by internalizing a determinant point process cost in the objective function. 

These aforementioned approaches make a fundamental assumption that the machine learning model does not change over time. However, the dire reality suggests that this assumption rarely holds. In fact, data shifts are so common nowadays in machine learning that they have sparkled the emerging field of domain generalization and domain adaptation. Organizations usually retrain models as a response to data shifts, and this induces corresponding shifts in the machine learning models parameters, which in turn cause serious concerns for the feasibility of the recourse action in the future~\citep{ref:rawal2021algorithmic}. In fact, all of the aforementioned approaches design the action which is feasible only with the \textit{current} model parameters, and they provide no feasibility guarantee for the \textit{future} parameters. If a recourse action fails to generate a favorable outcome in the future, then the recourse action may become less beneficial~\citep{ref:venkata2020philosophical}, the pledge of a brighter outcome is shattered, and the trust in the machine learning system is lost~\citep{ref:rudin2019stop}.

To tackle this challenge, \citet{ref:upadhyay2021towards} proposed ROAR, a framework for generating instance-level recourses that are robust to shifts in the underlying predictive model. ROAR used a robust optimization approach that hedges against an uncertainty set containing plausible values of the future model parameters. However, it is well-known that robust optimization solutions can be overly conservative because they may hedge against a pathological parameter in the uncertainty set~\citep{ref:bental2017globalized, ref:roos2020reducing}. A promising approach that can promote robustness while at the same time prevent from over-conservatism is the distributionally robust optimization framework~\citep{ref:el-ghaoui2003wcvar, ref:delage2010distributionally, ref:rahimian2019distributionally, ref:bertsimas2018data}. This framework models the future model parameters as random variables whose underlying distribution is unknown but is likely to be contained in an ambiguity set. The solution is designed to counter the worst-case distribution in the ambiguity set in a min-max sense. Distributionally robust optimization is also gaining popularity in many estimation and prediction tasks in machine learning~\citep{ref:namkoong2017variance, ref:kuhn2019tutorial}.

\textbf{Contributions.} This paper combines ideas and techniques from two principal branches of explainable artificial intelligence: counterfactual explanations and robustness to resolve the recourse problem under uncertainty. Concretely, our main contributions are the following:
    \begin{enumerate}[leftmargin=5mm]
    	\item We propose the framework of Distributionally Robust Recourse Action (DiRRAc) for designing a recourse action that is robust to mixture shifts of the model parameters. Our DiRRAc maximizes the probability that the action is feasible with respect to a mixture shift of model parameters while at the same time confines the action in the neighborhood of the input instance. Moreover, the DiRRAc model also hedges against the misspecification of the nominal distribution using a min-max form with a mixture ambiguity set prescribed by moment information.
    	\item We reformulate the DiRRAc problem into a finite-dimensional optimization problem with an explicit objective function. We also provide a projected gradient descent to solve the problem. 
    	\item We extend our DiRRAc framework along several axis to handle mixture weight uncertainty, to minimize the worst-case component probability of receiving the unfavorable outcome, and also to incorporate the Gaussian parametric information.
    \end{enumerate}

We first describe the recourse action problem with mixture shifts in Section~\ref{sec:stochastic}. In Section~\ref{sec:robust}, we present our proposed DiRRAc framework, its reformulation and the numerical routine for solving it. The extension to the parametric Gaussian setting will be discussed in Section~\ref{sec:gauss}. Section~\ref{sec:experiment} reports the numerical experiments showing the benefits of the DiRRAc framework and its extensions.

\textbf{Notations.} For each integer $K$, we have $[K] = \{1, \ldots, K\}$. We use $\PSD^d$ ($\PD^d$) to denote the space of symmetric positive semidefinite (definite) matrices. For any $A \in \R^{m \times m}$, the trace operator is $\Tr{A} = \sum_{i=1}^d A_{ii}$. If a distribution $\QQ_k$ has mean $\m_k$ and covariance matrix $\cov_k$, we write $\QQ_k \sim (\m_k, \cov_k)$. If additionally $\QQ_k$ is Gaussian, we write $\QQ_k \sim \mc N(\m_k, \cov_k)$. Writing $\QQ \sim (\QQ_k, p_k)_{k \in [K]}$ means $\QQ$ is a mixture of $K$ components, the $k$-th component has weight $p_k$ and distribution $\QQ_k$. 

\section{Recourse Action under Mixture Shifts} \label{sec:stochastic}

We consider a binary classification setting with label $\mc Y = \{0, 1\}$, where $0$ represents the unfavorable outcome while $1$ denotes the favorable one. The covariate space is $\R^d$, and any linear classifier $\mc C_{\theta}: \R^d \to \mc Y$ characterized by the $d$-dimensional parameter $\theta$ is of the form
\[
    \mc C_\theta(x) = \begin{cases}
        1 & \text{if } \theta^\top x \ge 0, \\
        0 & \text{otherwise.}
    \end{cases}
\]
Note that the bias term can be internalized into $\theta$ by adding an extra dimension, and thus it is omitted. 

Suppose that at this moment ($t = 0$), the current classifier is parametrized by $\theta_0$, and we are given an input instance $x_0 \in \R^d$ with \textit{un}favorable outcome, that is, $\mc C_{\theta_0}(x_0) = 0$. One period of time from now ($t = 1$), the parameters of the predictive model will change stochastically and are represented by a $d$-dimensional random vector $\tilde \theta$. This paper focuses on finding a recourse action $x$ which is reasonably close to the instance $x_0$, and at the same time, has a high probability of receiving a favorable outcome in the future. Figure~\ref{fig:use_case} gives a bird's eye view of the setup.

\begin{figure}[h]
    \vspace{-2mm}
    \centering
    \includegraphics[width=0.7\columnwidth]{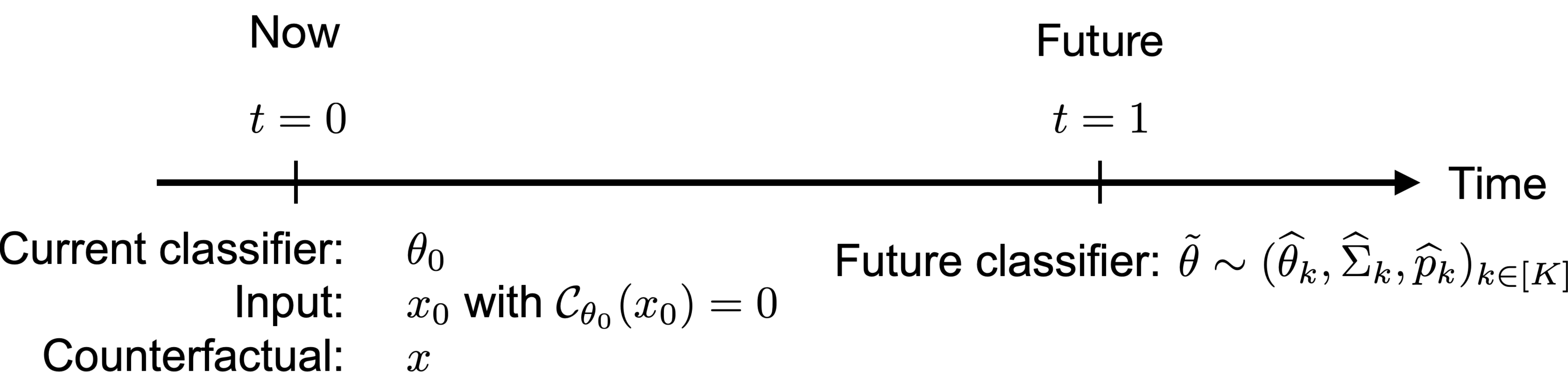}
    \caption{A canonical setup of the recourse action under mixture shifts problem.}
    \label{fig:use_case}
    \vspace{-2mm}
\end{figure}

To measure the closeness between the action $x$ and the input $x_0$, we assume that the covariate space is endowed with a non-negative, continuous cost function $c$. In addition, suppose temporarily that $\tilde \theta$ follows a distribution $\Pnom$. Because maximizing the probability of the favorable outcome is equivalent to minimizing the probability of the unfavorable outcome, the recourse can be found by solving
    \be \label{eq:prob}
            \min \left\{ \Pnom( \mathcal C_{\tilde \theta}(x) = 0)~:~ x \in \mbb X,~c(x, x_0) \le \delta \right\}.
    \ee
The parameter $\delta \ge 0$ in~\eqref{eq:prob} governs how far a recourse action can be from the input instance $x_0$. Note that we constrain $x$ in a set $\mbb X$ which captures operational constraints, for example, the highest education of a credit applicant should not be decreasing over time.
    
In this paper, we model the random vector $\tilde \theta$ using a finite mixture of distributions with $K$ components, the mixture weights are $\wh p$ satisfying $\sum_{k \in [K]} \wh p_k = 1$.  Each component in the mixture represents one specific type of model shifts: the weights $\wh p$ reflect the proportion of the shift types while the component distribution $\Pnom_k$ represents the (conditional) distribution of the future model parameters in the $k$-th shift. Further information on mixture distributions and their applications in machine learning can be found in~\citep[\S3.5]{ref:murphy2012machine}. Note that the mixture model is not a strong assumption. It is well-known that the Gaussian mixture model is a universal approximator of densities, in the sense that any smooth density can be approximated with any speciﬁc nonzero amount of error by a Gaussian mixture model with enough components~\citep{ref:goodfellow2016dlbook, ref:mclachlan2000finite}. Thus, our mixture models are flexible enough to hedge against distributional perturbations of the parameters under large values of $K$. The design of the ambiguity set to handle ambiguous mixture weights and under the Gaussian assumption is extensively studied in the literature on distributionally robust optimization~\citep{ref:hanasusanto2015distributionally, ref:chen2021sharing}.

If each $\Pnom_k$ is a Gaussian distribution $\mc N(\wh \theta_k, \covsa_k)$, then $\Pnom$ is a mixture of Gaussian distributions. The objective of problem~\eqref{eq:prob} can be expressed as
    \[
        \Pnom( \mathcal C_{\tilde \theta}(x) = 0) = \sum_{k \in [K]} \wh p_k \Pnom_k(\mathcal C_{\tilde \theta}(x) = 0) = \sum_{k \in [K]} \wh p_k\Phi \Big( \frac{-x^\top \wh \theta_k}{\sqrt{x^\top \covsa_k x}} \Big), 
    \]
    where the first equality follows from the law of conditional probability, and $\Phi$ is the cumulative distribution function of a standard Gaussian distribution.
Under the Gaussian assumption, we can solve~\eqref{eq:prob} using a projected gradient descent type of algorithm~\citep{ref:boyd2004convex}.

\begin{remark}[Nonlinear models] \label{rem:nonlinear}
    Our analysis focuses on linear classifiers, which is a common setup in the literature~\citep{ref:upadhyay2021towards, ref:ustun2019actionable, ref:rawal2021algorithmic, ref:karimi2020modelagnostic, ref:wachter2018counterfactual, ref:marco2016lime}. To extend to \textit{non}linear classifiers, we can follow a similar approach as in~\citet{ref:rawal2020interpretable} and~\citet{ref:upadhyay2021towards} by first using LIME~\citet{ref:marco2016lime} to approximate the \textit{non}linear classifiers locally with an interpretable linear model, then subsequently applying our framework.
\end{remark}

\section{Distributionally Robust Recourse Action Framework}
\label{sec:robust}

Our Distributionally Robust Recourse Action (DiRRAc) framework robustifies formulation~\eqref{eq:prob} by relaxing the parametric assumption and hedging against distribution misspecification. First, we assume that the mixture components $\Pnom_k$ are specified only through moment information, and no particular parametric form of the distribution is imposed. In effect, $\Pnom_k$ is assumed to have mean vector $\wh \theta_k \in \R^d$ and positive definite covariance matrix $\covsa_k \succ 0$. Second, we leverage ideas from distributionally robust optimization to propose a min-max formulation of~\eqref{eq:prob}, in which we consider an \textit{ambiguity set} which contains a family of probability distributions that are sufficiently close to the nominal distribution $\Pnom$. We prescribe the ambiguity set using Gelbrich distance~\citep{ref:gelbrich1990formula}.

\begin{definition}[Gelbrich distance] \label{def:gelbrich}
    The Gelbrich distance $\mathds G$ between two tuples~$(\theta, \Sigma)\in \R^d \times \mbb S_+^d$ and $(\wh \theta, \covsa) \in \R^d \times \mbb S_+^d$ amounts to
    $\mathds G((\theta, \Sigma), (\wh \theta, \covsa)) \Let \sqrt{\|\theta - \wh \theta\|_2^2 + \Tr{\Sigma + \covsa - 2(\covsa^{\frac{1}{2}} \Sigma \covsa^{\frac{1}{2}})^{\frac{1}{2}}}}$.

\end{definition}
It is easy to verify that $\mathds G$ is non-negative, symmetric and it vanishes to zero if and only if $(\theta, \cov) = (\wh \theta, \covsa)$. Further, $\mathds G$ is a distance on $\R^d \times \PSD^d$ because it coincides with the type-$2$ Wasserstein distance between two Gaussian distributions $\mc N(\m, \cov)$ and $\mc N(\msa, \covsa)$~\citep{ref:givens1984class}. Distributionally robust formulations with moment information prescribed by the $\mathds G$ distance are computationally tractable under mild conditions, deliver reasonable performance guarantees and also generate a conservative approximation of the Wasserstein distributionally robust optimization problem~\citep{ref:kuhn2019tutorial, ref:nguyen2021mean}.

In this paper, we use the Gelbrich distance $\mathds G$ to form a neighborhood around each $\Pnom_k$ as
\[
    \mc B_k(\Pnom_k) \Let \left\{ \QQ_k : \QQ_k \sim (\theta_k, \cov_k),~\mathds G((\theta_k, \Sigma_k), (\wh \theta_k, \covsa_k)) \le \rho_k \right\}.
\]
Intuitively, one can view $\mc B_k(\Pnom_k)$ as a ball centered at the nominal component $\Pnom_k$ of radius $\rho_k \ge 0$ prescribed using the distance $\mathds G$. This component set $\mc B_k(\Pnom_k)$ is non-parametric, and the first two moments of $\QQ_k$ are sufficient to decide whether $\QQ_k$ belongs to $\mc B_k(\Pnom_k)$. Moreover, if $\QQ_k \in \mc B_k(\Pnom_k)$, then any distribution $\QQ_k'$ with the same mean vector and covariance matrix as $\QQ_k$ also belongs to $\mc B_k(\Pnom_k)$. Notice that even when the radius $\rho_k$ is zero, the component set $\mc B_k(\Pnom_k)$ does not collapse into a singleton. Instead, if $\rho_k = 0$ then $\mc B_k(\Pnom_k)$ still contains \textit{all} distributions of the same moment $(\wh \theta_k, \covsa_k)$ with the nominal component distribution $\Pnom_k$, and consequentially it possesses the robustification effects against the parametric assumption on $\Pnom_k$. The component sets are utilized to construct the ambiguity set for the mixture distribution as
\[
    \mc B(\Pnom) \Let \left\{
        \QQ: 
        \begin{array}{l}
        \exists \QQ_{k} \in \mc B_{k}(\Pnom_k) ~\forall k \in [K] \text{ such that } \QQ \sim (\QQ_k, \wh p_k)_{k \in [K]}
        \end{array}
    \right\}.
\]
Any $\QQ \in \mc B(\Pnom)$ is also a mixture distribution with $K$ components, with the same mixture weights $\wh p$. Thus, $\mc B(\Pnom)$ contains all perturbations of $\Pnom$ induced separately on each component by $\mc B_k(\Pnom_k)$.

We are now ready to introduce our DiRRAc model, which is a min-max problem of the form
\be \label{eq:dro}
    \begin{array}{cl}
        \Inf{x \in \mathbb{X}}~ &\Sup{\QQ \in \mc B(\Pnom)} \QQ ( \mc C_{\tilde \theta}(x) = 0) \\
        \st &  c(x, x_0) \le \delta \\
        & \Sup{\QQ_k \in \mc B_k(\Pnom_k)} \QQ_k ( \mc C_{\tilde \theta}(x) = 0 ) < 1 \qquad \forall k \in [K].
    \end{array}
\ee
The objective of~\eqref{eq:dro} is to minimize the worst-case probability of unfavorable outcome of the recourse action. Moreover, the last constraint imposes that for each component, the worst-case conditional probability of unfavorable outcome should be strictly less than one.  Put differently, this last constraint requires that the action should be able to lead to favorable outcome for \textit{any} distribution in $\mc B_k(\Pnom_k)$. By definition, each supremum subproblem in~\eqref{eq:dro} is an infinite-dimensional maximization problem over the space of probability distributions, and thus it is inherently difficult. Fortunately, because we use the Gelbrich distance to prescribe the set $\mc B_k(\Pnom_k)$, we can solve these maximization problems analytically. This consequentially leads to a closed-form reformulation of the DiRRAc model into a finite-dimensional problem. Next, we will reformulate the DiRRAc problem~\eqref{eq:dro}, provide a sketch of the proof and propose a numerical solution routine.

\subsection{Reformulation of DiRRAc} \label{sec:refor}

 Each supremum in~\eqref{eq:dro} is an infinite-dimensional optimization problem on the space of probability distributions. We now show that~\eqref{eq:dro} can be reformulated as a finite-dimensional problem. Towards this end, let $\mc X$ be the following $d$-dimensional set.
\be \label{eq:X-def}
\mc X \Let \left\{ x \in \mbb X: \begin{array}{l}
             c(x, x_0) \le \delta,\quad    -\wh \theta_k^\top x + \rho_k \|x \|_2 < 0 \quad \forall k \in [K]
              \end{array} 
              \right\}.
\ee
 The next theorem asserts that the DiRRAc problem~\eqref{eq:dro} can be reformulated as a $d$-dimensional optimization problem with an explicit, but complicated, objective function.



\begin{theorem}[Equivalent form of DiRRAc] \label{thm:equiv}
    Problem~\eqref{eq:dro} is equivalent to the finite-dimensional optimization problem
    \be \label{eq:equivalent}
            \Inf{x \in \mc X} ~ \ds \sum_{k \in [K]} \wh p_k  f_k(x)^2,
   \ee
    where the function $f_k$ admits the closed-form expression
    \[
    f_k(x) =
    \frac{\rho_k \wh \theta_k^\top x  \|x\|_2 + \sqrt{x^\top \covsa_k x} \sqrt{(\wh \theta_k^\top x)^2 + x^\top \covsa_k x - \rho_k^2 \|x\|_2^2} }{(\wh \theta_k^\top x)^2 + x^\top \covsa_k x}.
    \]
\end{theorem}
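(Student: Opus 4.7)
The plan is to reduce the infinite-dimensional DRO problem~\eqref{eq:dro} to the explicit finite-dimensional form~\eqref{eq:equivalent} by evaluating each distributional supremum in closed form. First, I would exploit the mixture structure: because any $\QQ \in \mc B(\Pnom)$ decomposes as a mixture with fixed weights $\wh p$ over components $\QQ_k \in \mc B_k(\Pnom_k)$ that can be chosen independently, the objective splits as $\sup_{\QQ \in \mc B(\Pnom)} \QQ(\mc C_{\tilde\theta}(x) = 0) = \sum_{k \in [K]} \wh p_k \sup_{\QQ_k \in \mc B_k(\Pnom_k)} \QQ_k(\mc C_{\tilde\theta}(x) = 0)$, and the per-component side constraints in~\eqref{eq:dro} exactly match the summands in this decomposition. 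The problem then reduces to evaluating the per-component worst-case probability $h_k(x) \Let \sup_{\QQ_k \in \mc B_k(\Pnom_k)} \QQ_k(\tilde\theta^\top x < 0)$ and showing $h_k(x) = f_k(x)^2$ whenever the strict inequality $-\wh\theta_k^\top x + \rho_k \|x\|_2 < 0$ holds.

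Next, for each fixed $k$ I would evaluate $h_k(x)$ in two nested stages. Since $\mc B_k(\Pnom_k)$ is defined solely through the first two moments, the innermost supremum over distributions with prescribed mean $\theta_k^\top x$ and variance $x^\top \Sigma_k x$ collapses, by the one-sided Chebyshev (Cantelli) inequality, to $(x^\top \Sigma_k x)/(x^\top \Sigma_k x + (\theta_k^\top x)^2)$ whenever $\theta_k^\top x > 0$, and equals $1$ otherwise. Requiring $h_k(x) < 1$ therefore forces $\theta_k^\top x > 0$ for every admissible mean in the Gelbrich ball, and optimizing this lower envelope along the direction $-x/\|x\|_2$ yields exactly the linear inequality appearing in the definition of $\mc X$ in~\eqref{eq:X-def}.

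The remaining task is to maximize the Cantelli ratio over $(\theta_k, \Sigma_k)$ in the Gelbrich ball. The key structural fact is that the squared Gelbrich distance decouples additively into the squared mean deviation $\eta^2 = \|\theta_k - \wh\theta_k\|_2^2$ and the squared Bures--Wasserstein distance $\tau^2$ between $\Sigma_k$ and $\covsa_k$. By pushing forward along the $\|x\|_2$-Lipschitz linear map $\theta \mapsto x^\top \theta$, I would argue that for fixed $(\eta,\tau)$ the worst-case mean and variance of $\tilde\theta^\top x$ are $\wh\theta_k^\top x - \eta\|x\|_2$ and $(\sqrt{x^\top \covsa_k x} + \tau\|x\|_2)^2$ respectively, and that both bounds are simultaneously attainable by a single rank-one perturbation of $(\wh\theta_k, \covsa_k)$. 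This reduces the sup to a two-dimensional problem on the quarter arc $\eta^2 + \tau^2 = \rho_k^2$, $\eta,\tau \ge 0$.

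Finally, writing $a = \wh\theta_k^\top x$, $b = \sqrt{x^\top \covsa_k x}$, $c = \|x\|_2$, and $D = \sqrt{a^2 + b^2 - \rho_k^2 c^2}$, minimizing the ratio $t = (a - \eta c)/(b + \tau c)$ on the arc leads via first-order conditions to the quadratic $(\rho_k^2 c^2 - b^2) t^2 + 2ab\, t + (\rho_k^2 c^2 - a^2) = 0$, whose unique admissible positive root is $t_\star = (\rho_k c D - ab)/(\rho_k^2 c^2 - b^2)$. The worst-case probability is then $h_k(x) = 1/(1 + t_\star^2)$, and I would close the proof by verifying the algebraic identity $(\rho_k a c + b D)(\rho_k c D - ab) = (\rho_k^2 c^2 - b^2)(a D - \rho_k b c)$, which rearranges directly to $1/(1 + t_\star^2) = f_k(x)^2$. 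The main obstacle I anticipate is the case analysis around $\rho_k^2 c^2 = b^2$: the quadratic degenerates to linear, and the sign of $\rho_k c D - ab$ flips across the regimes $\rho_k^2 c^2 \gtrless b^2$; I would verify that both regimes yield the same closed form $f_k(x)$ and that the boundary $\rho_k^2 c^2 = b^2$ is recovered by continuity, so that the single expression stated in the theorem is valid throughout $\mc X$.
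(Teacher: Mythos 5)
Your proposal is correct and lands on the same closed form, but it reaches the key per-component quantity by a genuinely different route. The outer skeleton coincides with the paper's: both decompose $\sup_{\QQ \in \mc B(\Pnom)} \QQ(\mc C_{\tilde\theta}(x)=0)$ into $\sum_{k} \wh p_k \sup_{\QQ_k \in \mc B_k(\Pnom_k)} \QQ_k(\tilde\theta^\top x \le 0)$ using the fixed weights and the independence of the components, and both identify the side constraints with $-\wh\theta_k^\top x + \rho_k\|x\|_2 < 0$. For the per-component supremum, however, the paper does not optimize over the Gelbrich ball directly: it quotes a worst-case Value-at-Risk formula (Lemma~\ref{lemma:wc-var}, from \citet{ref:nguyen2019adversarial}), $\sup_{\QQ_k} \QQ_k\text{-}\VaR_\beta(-\tilde\theta^\top x) = -\wh\theta_k^\top x + \sqrt{(1-\beta)/\beta}\,\sqrt{x^\top\covsa_k x} + (\rho_k/\sqrt{\beta})\|x\|_2$, and obtains the worst-case probability as the smallest $\beta$ making this nonpositive; the substitution $t=\sqrt{\beta}$ yields a quadratic with leading coefficient $A_k^2+B_k^2>0$, so no degenerate case ever arises. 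You instead unpack that lemma from first principles: Cantelli for the fixed-moment inner problem, the additive decoupling of the squared Gelbrich distance into mean and Bures parts with joint attainability of the extremal projected mean and standard deviation, a two-dimensional optimization on the arc $\eta^2+\tau^2=\rho_k^2$, and a closing algebraic identity. I checked your quadratic, its root $t_\star=(\rho_k c D - ab)/(\rho_k^2c^2-b^2)$, and the identity $1/(1+t_\star^2)=f_k(x)^2$; all are correct, and $t_\star>0$ in both regimes $\rho_k^2 c^2 \gtrless b^2$ as you anticipate. The paper's route buys brevity and sidesteps your removable singularity at $\rho_k^2\|x\|_2^2 = x^\top\covsa_k x$ because its quadratic is in $\sqrt{\beta}$ rather than in the ratio $\mu/\sigma$; your route buys a self-contained derivation that exhibits the extremal moment pair explicitly and does not lean on an external lemma. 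Two points to make rigorous when you write it up: (i) the minimum of $(a-\eta c)/(b+\tau c)$ on the quarter arc is attained at an interior stationary point and not at a corner (the derivative in the arc angle is proportional to $b\sin\phi - a\cos\phi + \rho_k c$, which changes sign exactly once because $a>\rho_k c$ on $\mc X$); and (ii) the simultaneous attainability claim needs the explicit construction — the extremal covariance is $\Sigma_k = AA^\top$ with $A = \covsa_k^{1/2} + \tau\, x (x^\top\covsa_k^{1/2})/(\|x\|_2\sqrt{x^\top\covsa_k x})$ a rank-one perturbation aligned with both $x$ and $\covsa_k^{1/2}x$, not a perturbation along $x$ alone.
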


Next, we sketch a proof of Theorem~\ref{thm:equiv} and a solution procedure to solve problem~\eqref{eq:equivalent}.

\subsection{Proof Sketch}
 
For any component $k \in [K]$, define the following worst-case probability of unfavorable outcome
    \be \label{eq:f-def}
        f_k(x) \Let \Sup{\QQ_k \in \mc B_k(\Pnom_k)} \QQ_k ( \mc C_{\tilde \theta}(x) = 0 ) = \Sup{\QQ_k \in \mc B_k(\Pnom_k)} \QQ_k (\tilde \theta^\top x \le 0) \qquad \forall k \in [K].
    \ee

To proceed, we rely on the following elementary result from~\citep[Lemma~3.31]{ref:nguyen2019adversarial}.
\begin{lemma}[Worst-case Value-at-Risk] \label{lemma:wc-var}
    For any $x \in \R^d$ and $\beta \in (0, 1)$, we have
     \be  \label{eq:wc-var}
    \begin{aligned}
    \inf \left\{ \tau : \Sup{\QQ_k \in \mc B_k(\Pnom_k)} \QQ_k (\tilde \theta^\top x \le -\tau) \le \beta \right\}   =  - \wh \theta_k^\top x + \sqrt{\frac{1 - \beta}{\beta}} \sqrt{x^\top \covsa_k x} + \frac{\rho_k}{\sqrt{\beta}} \|x\|_2.
    \end{aligned}
    \ee
\end{lemma}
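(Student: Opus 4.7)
The plan is to peel the worst-case VaR apart in two stages: first reduce the distributional supremum over all laws with prescribed first two moments to a closed-form expression via the classical one-sided Chebyshev (Cantelli) inequality, and then maximize the resulting expression over the Gelbrich ball in moment space.

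First I would rewrite the left-hand side of \eqref{eq:wc-var} by exchanging the $\inf$ over $\tau$ with the $\sup$ over $\QQ_k$. Because the condition $\sup_{\QQ_k} \QQ_k(\tilde\theta^\top x \le -\tau) \le \beta$ is equivalent to $\QQ_k(\tilde\theta^\top x \le -\tau) \le \beta$ holding for every $\QQ_k$, standard monotonicity gives
\[
\mathrm{LHS} \;=\; \sup_{\QQ_k \in \mc B_k(\Pnom_k)} \inf\{\tau : \QQ_k(\tilde\theta^\top x \le -\tau) \le \beta\}.
\]
I would then parameterize $\QQ_k \sim (\theta_k, \Sigma_k)$ and split this sup into an outer maximization over $(\theta_k,\Sigma_k)$ in the Gelbrich ball followed by an inner maximization over $\QQ_k$ sharing those moments. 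For the inner layer, $\tilde\theta^\top x$ has mean $\theta_k^\top x$ and variance $x^\top \Sigma_k x$; Cantelli's inequality combined with its tightness over the moment class (attained in the limit by a two-point distribution) shows that the inner problem evaluates to $-\theta_k^\top x + \sqrt{(1-\beta)/\beta}\,\sqrt{x^\top \Sigma_k x}$.

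Next I would tackle the outer maximization over $\mathds G((\theta_k,\Sigma_k),(\wh\theta_k,\covsa_k))\le\rho_k$. The Gelbrich distance splits cleanly into a squared Euclidean term on means and a Bures term on covariances, so I allocate the radius budget as $\|\theta_k-\wh\theta_k\|_2^2 \le \alpha^2$ and $\Tr{\Sigma_k+\covsa_k-2(\covsa_k^{1/2}\Sigma_k\covsa_k^{1/2})^{1/2}} \le \gamma^2$ with $\alpha^2+\gamma^2\le\rho_k^2$. Cauchy--Schwarz delivers $\sup(-\theta_k^\top x) = -\wh\theta_k^\top x + \alpha\|x\|_2$, and the Bures-ball identity $\sup \sqrt{x^\top \Sigma_k x} = \sqrt{x^\top \covsa_k x} + \gamma\|x\|_2$ (provable via a Wasserstein coupling argument, or by diagonalizing along the eigenbasis of $\covsa_k$ and solving a rank-one scalar subproblem) handles the covariance factor.

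Putting the pieces together gives $-\wh\theta_k^\top x + \sqrt{(1-\beta)/\beta}\sqrt{x^\top \covsa_k x} + (\alpha + \sqrt{(1-\beta)/\beta}\,\gamma)\|x\|_2$. A final Cauchy--Schwarz applied to the pair $(\alpha,\gamma)$ under $\alpha^2+\gamma^2\le\rho_k^2$ maximizes the bracketed coefficient to $\rho_k\sqrt{1+(1-\beta)/\beta}=\rho_k/\sqrt{\beta}$, which matches \eqref{eq:wc-var}. The main obstacle is the sharpness claim in step one (Cantelli is stated as an inequality, so one must exhibit an extremal law with the prescribed moments that saturates the bound) together with the analogous sharpness for the Bures-ball supremum; one must also verify that the outer and inner suprema are jointly attained, which reduces to the fact that the two inner extremizers---one for the mean and one for the covariance---can be chosen independently without violating each other's constraint.
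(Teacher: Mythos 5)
Your argument is sound, but it is worth noting that it does not parallel anything in the paper: the paper offers no proof of Lemma~\ref{lemma:wc-var} at all and simply imports it verbatim from \citet[Lemma~3.31]{ref:nguyen2019adversarial}. Your two-stage derivation --- (i) exchange the $\inf$ over $\tau$ with the $\sup$ over $\QQ_k$ (valid because each feasibility set $\{\tau : \QQ_k(\tilde\theta^\top x \le -\tau)\le\beta\}$ is an upward-closed half-line, so the infimum of the intersection is the supremum of the infima), reduce the inner supremum over laws with fixed moments to $-\theta_k^\top x + \sqrt{(1-\beta)/\beta}\,\sqrt{x^\top\Sigma_k x}$ via the sharp Cantelli bound, and (ii) maximize over the Gelbrich ball by splitting the radius budget between the mean and covariance parts and closing with Cauchy--Schwarz, using $\sqrt{1+(1-\beta)/\beta}=1/\sqrt{\beta}$ --- is exactly the standard route to this formula and recovers the stated right-hand side. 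Two details you flag deserve to be spelled out if this were written in full. First, Cantelli's tightness must be realized by a $d$-dimensional law with the prescribed mean vector and covariance matrix, not merely by a scalar two-point law; this is handled by decomposing $\covsa_k = ww^\top + R$ with $w = \covsa_k x/\sqrt{x^\top \covsa_k x}$ and $Rx = 0$, and setting $\tilde\theta = \theta_k + \zeta w + \eta$ with $\zeta$ the extremal two-point scalar and $\eta \sim (0,R)$ independent. Second, attainment of $\sqrt{x^\top\covsa_k x}+\gamma\|x\|_2$ over the Bures ball follows from the explicit rank-one stretch $\Sigma_k = (I + c\,xx^\top/\|x\|_2^2)\,\covsa_k\,(I + c\,xx^\top/\|x\|_2^2)$ with $c = \gamma\|x\|_2/\sqrt{x^\top\covsa_k x}$, which uses $\covsa_k \succ 0$ as assumed by the paper; since the Gelbrich distance is additively separable in the mean and covariance blocks, the two extremizers do not interfere, so the joint supremum is attained. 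With those details filled in, your proof is complete and self-contained, which is arguably more than the paper provides.
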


Note that the left-hand side of~\eqref{eq:wc-var} is the worst-case Value-at-Risk with respect to the ambiguity set $\mc B_k(\Pnom_k)$. Leveraging this result, the next proposition provides the analytical form of $f_k(x)$. 

\begin{proposition}[Worst-case probability] \label{prop:wc-prob}
    For any $k \in [K]$ and $(\wh \theta_k, \covsa_k, \rho_k) \in \R^d \times \PSD^d \times \R_+$, define the following constants $A_k \Let - \wh \theta_k^\top x $, $B_k \Let \sqrt{x^\top \covsa_k x}$, and $C_k \Let \rho_k \|x\|_2$. We have
    \[
        f_k(x) \Let \Sup{\QQ_k \in \mc B_k(\Pnom_k)} \QQ_k (\tilde \theta^\top x \le 0) 
        = 
        \begin{cases}
            1 & \text{if } A_k  + C_k \ge 0, \\
            \Big(\frac{-A_k C_k + B_k \sqrt{A_k^2 + B_k^2 - C_k^2} }{A_k^2 + B_k^2} \Big)^2 \in (0, 1) & \text{if } A_k + C_k < 0.
        \end{cases}
    \]
\end{proposition}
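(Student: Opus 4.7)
The plan is to derive $f_k(x)$ from Lemma~\ref{lemma:wc-var} by inverting the worst-case Value-at-Risk. Fix $k \in [K]$ and, for notational brevity, define
\[
g(\beta) \Let A_k + \sqrt{\tfrac{1-\beta}{\beta}}\, B_k + \tfrac{1}{\sqrt{\beta}}\, C_k, \qquad \beta \in (0,1],
\]
so that Lemma~\ref{lemma:wc-var} asserts $g(\beta) = \inf\{\tau : \Sup{\QQ_k \in \mc B_k(\Pnom_k)} \QQ_k(\tilde\theta^\top x \le -\tau) \le \beta\}$. The usual duality between tail-probability and VaR formulations then gives $f_k(x) = \inf\{\beta \in (0,1] : g(\beta) \le 0\}$; indeed, for $\beta \ge f_k(x)$ the choice $\tau = 0$ is feasible in the infimum defining $g(\beta)$, while for $\beta < f_k(x)$ no $\tau \le 0$ can be feasible because the event $\{\tilde\theta^\top x \le -\tau\}$ only grows as $\tau$ decreases. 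Since $g$ is continuous and non-increasing on $(0,1]$, this reduces $f_k(x)$ to solving $g(\beta) = 0$ whenever a root exists.

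Next I would split into the two cases stated in the proposition. When $A_k + C_k \ge 0$, the inequalities $\sqrt{(1-\beta)/\beta}\, B_k \ge 0$ and $(1/\sqrt{\beta})\, C_k \ge C_k$ yield $g(\beta) \ge A_k + C_k \ge 0$ for every $\beta \in (0,1]$, so no root occurs and $f_k(x) = 1$. When $A_k + C_k < 0$, the inequality $A_k < -C_k \le 0$ gives $A_k^2 > C_k^2$, so $A_k^2 + B_k^2 - C_k^2 > 0$ and the square root in the statement is well-defined. Substituting $u = 1/\sqrt{\beta} \ge 1$ (so $\sqrt{(1-\beta)/\beta} = \sqrt{u^2 - 1}$), the equation $g(\beta) = 0$ rearranges into $B_k \sqrt{u^2 - 1} = -(A_k + C_k u)$. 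Squaring (consistent provided $A_k + C_k u \le 0$) yields the quadratic
\[
(B_k^2 - C_k^2)\, u^2 - 2 A_k C_k\, u - (A_k^2 + B_k^2) = 0,
\]
whose discriminant simplifies to $4 B_k^2 (A_k^2 + B_k^2 - C_k^2)$. Rationalizing the two candidate roots by their conjugates $A_k C_k \mp B_k \sqrt{A_k^2 + B_k^2 - C_k^2}$ isolates
\[
u = \frac{A_k^2 + B_k^2}{-A_k C_k + B_k \sqrt{A_k^2 + B_k^2 - C_k^2}},
\]
so $f_k(x) = 1/u^2$ matches the displayed formula.

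The main obstacle is the sign bookkeeping surrounding the squaring step: both quadratic roots appear formally, and I must confirm that the retained $u$ is (i) positive, (ii) at least one, and (iii) compatible with the pre-squared condition $A_k + C_k u \le 0$. The rationalization above handles (i) because $A_k < 0$ together with $C_k \ge 0$ and $B_k \sqrt{A_k^2 + B_k^2 - C_k^2} > 0$ force the denominator to be strictly positive; a monotonicity check on $g$ combined with $g(1) = A_k + C_k < 0$ in this case delivers $u > 1$, which also yields (iii) after a direct substitution; and strict positivity of the numerator $-A_k C_k + B_k \sqrt{A_k^2 + B_k^2 - C_k^2}$ pins down $f_k(x) \in (0,1)$, as asserted.
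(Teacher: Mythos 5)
Your proposal is correct and follows essentially the same route as the paper's proof: both convert the worst-case probability into $\inf\{\beta : \text{worst-case }\VaR_\beta \le 0\}$ via Lemma~\ref{lemma:wc-var}, reduce to a quadratic by a square-root substitution (you use $u = 1/\sqrt{\beta}$ where the paper uses $t = \sqrt{\beta}$, which is immaterial), and select the admissible root. Your extra sign bookkeeping around the squaring step is slightly more careful than the paper's appeal to monotonicity, but the argument is the same.
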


The proof of Theorem~\ref{thm:equiv} follows by noticing that the DiRRAc problem~\eqref{eq:dro} can be reformulated using the elementary functions $f_k$ as
\[
        \Min{x \in \mathbb{X}} \left\{ \sum_{k \in [K]}~\wh p_k f_k(x) ~:~c(x, x_0) \le \delta,~~f_k(x) < 1 \quad \forall k \in [K] \right\},
\]
where the objective function follows from the definition of the set $\mc B(\Pnom)$. It suffices now to combine with Proposition~\ref{prop:wc-prob} to obtain the necessary result. The detailed proof is relegated to the Appendix. Next we propose a projected gradient descent algorithm to solve the problem~\eqref{eq:equivalent}.

\subsection{Projected Gradient Descent Algorithm} \label{sec:pgd}

We consider in this section an iterative numerical routine to solve the DiRRAc problem in the equivalent form~\eqref{eq:equivalent}. First, notice that the second constraint that defines~$\mc X$ in~\eqref{eq:X-def} is a strict inequality, thus the set $\mc X$ is open. We thus modify slightly this constraint by considering the following set
\[
\mc X_{\eps} = \left\{ x \in \mbb X~:~ c(x, x_0) \le \delta, \; -\wh \theta_k^\top x + \rho_k \|x \|_2 \le -\eps \quad \forall k \in [K] \right\}
\]
for some value $\eps > 0$ sufficiently small. Moreover, if the parameter $\delta$ is too small, it may happen that the set $\mc X_\eps$ becomes empty. Define $\delta_{\min} \in \R_+$ as the optimal value of the following problem
    \be \label{eq:delta_min}
        \inf \left\{ c(x, x_0)~:~ x \in \mbb X,~ -\wh \theta_k^\top x + \rho_k \|x \|_2 \le -\eps       \quad \forall k \in [K] \right\}.
    \ee
Then it is easy to see that $\mc X_{\eps}$ is non-empty whenever $\delta \ge \delta_{\min}$. In addition, because $c$ is continuous and $\mbb X$ is closed, the set $\mc X_{\eps}$ is compact. In this case, we can consider problem~\eqref{eq:equivalent} with the feasible set being $\mc X_{\eps}$, for which the optimal solution is guaranteed to exist. Let us now define the projection operator $\mathrm{Proj}_{\mc X_{\eps}}$ as $\mathrm{Proj}_{\mc X_{\eps}}(x') \Let \arg\min \left\{ \| x - x' \|_2^2  ~:~ x \in \mc X_{\eps} \right\}$. If $\mathbb X$ is convex and $c(\cdot, x_0)$ is a convex function, then $\mc X_{\eps}$ is also convex, and the projection operation can be efficiently computed using convex optimization.


In particular, suppose that $c(x, x_0) = \| x - x_0 \|_2$ is the Euclidean norm and $\mathbb X$ is second-order cone representable, then the projection is equivalent to a second-order cone program, and can be solved using off-the-shelf solvers such as GUROBI~\citet{ref:gurobi2021gurobi} or Mosek~\citep{ref:mosek}. The projection operator $\mathrm{Proj}_{\mc X_{\eps}}$ now forms the building block of a projected gradient descent algorithm with a backtracking linesearch. The details regarding the algorithm, along with the convergence guarantee, are presented in Appendix~\ref{sec:app-algo}. 

To conclude this section, we visualize the geometrical intuition of our method in Figure~\ref{fig:feasible_set}.



\begin{figure}[ht]
\vspace{-1mm}
\begin{minipage}{0.55\textwidth}
\caption{The feasible set $\mc X$ in~\eqref{eq:X-def} is shaded in blue. The circular arc represents the proximity boundary $c(x, x_0) = \delta$ with $c$ being an Euclidean distance. Dashed lines represent the hyperplane $-\wh \theta_k^\top x =0$ for different $k$, while elliptic curves represent the robust margin $-\wh \theta_k^\top x + \rho_k \| x\| = 0$ with matching color. Increasing the ambiguity size $\rho_k$ brings the elliptic curves towards the top-right corner and farther away from the dash lines. The set $\mc X$ taken as the intersection of elliptical and promixity constraints will move deeper into the interior of the favorable prediction region, resulting in more robust recourses.}
\label{fig:feasible_set}
\end{minipage}
\begin{minipage}{0.42\textwidth}
\centering
\vspace{-5mm}
\includegraphics[width=\linewidth]{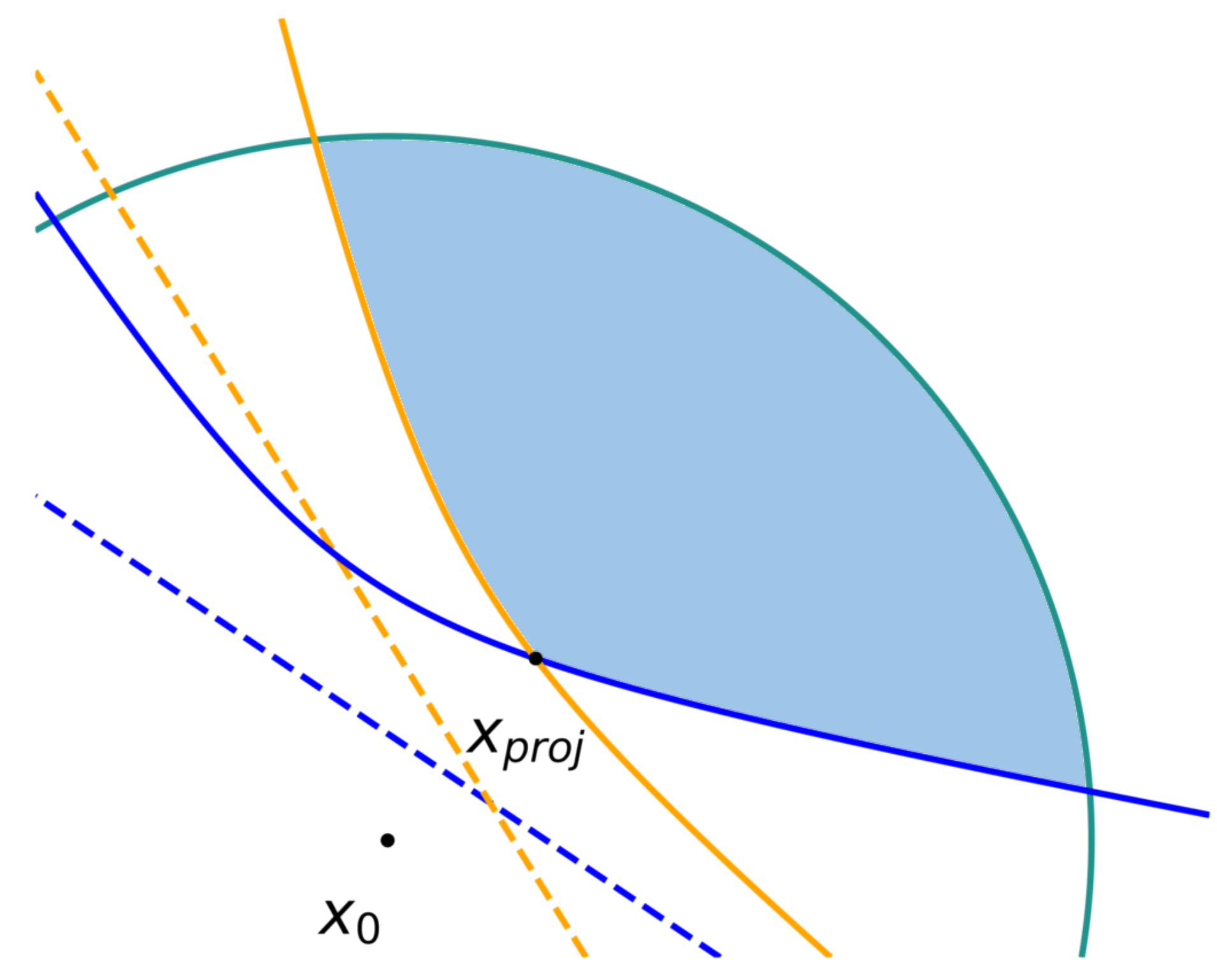}
\end{minipage}
\vspace{-4mm}
\end{figure}

\section{Gaussian DiRRAc Framework} \label{sec:gauss}

We here revisit the Gaussian assumption on the component distributions, and propose the parametric Gaussian DiRRAc framework. We make the temporary assumption that $\Pnom_k$ are Gaussian for all $k \in [K]$, and we will robustify against only the misspecification of the nominal mean vector and covariance matrix $(\wh \theta_k, \covsa_k)$. To do this, we first construct the Gaussian component ambiguity sets
    \[
        \forall k: ~~~ \mc B_k^{\mc N}(\Pnom_k) \Let \left\{ \QQ_k : 
        \begin{array}{l}
        \QQ_k \sim \mc N(\theta_k, \cov_k), ~\mathds G((\theta_k, \Sigma_k), (\wh \theta_k, \covsa_k)) \le \rho_k 
        \end{array}
        \right\},
    \]
    where the superscript emphasizes that the ambiguity sets are neighborhoods in the space of Gaussian distributions. The resulting ambiguity set for the mixture distribution is
    \[
        \mc B^{\mc N}(\Pnom) = \left\{
            \QQ~:~
            \exists \QQ_{k} \in \mc B_{k}^{\mc N}(\Pnom_k) ~ \forall k \in [K] \text{ such that } \QQ \sim (\QQ_k, \wh p_k)_{k \in [K]}
        \right\}.
    \]
The Gaussian DiRRAc problem is formally defined as
\be \label{eq:dro-gauss}
    \begin{array}{cl}
        \Min{x \in \mbb X}~ &\Sup{\QQ \in \mc B^{\mc N}(\Pnom)} \QQ ( \mc C_{\tilde \theta}(x) = 0) \\
        \st &  c(x, x_0) \le \delta \\
        & \Sup{\QQ_k \in \mc B_k^{\mc N}(\Pnom_k)} \QQ_k ( \mc C_{\tilde \theta}(x) = 0 ) < \half \qquad \forall k \in [K].
    \end{array}
\ee
Similar to Section~\ref{sec:robust}, we will provide the reformulation of the Gaussian DiRRAc formulation and a sketch of the proof in the sequence. Note that the last constraint in~\eqref{eq:dro-gauss} has margin $\half$ instead of $1$ as in the DiRRAc problem~\eqref{eq:dro}. The detailed reason will be revealed in the proof sketch in Section~\ref{sec:gauss-proof}.

\subsection{Reformulation of Gaussian DiRRAc} \label{sec:refor-gauss}

Remind that the feasible set $\mc X$ is defined as in~\eqref{eq:X-def}. The next theorem asserts
the equivalent form of the Gaussian DiRRAc problem~\eqref{eq:dro-gauss}.
\begin{theorem}[Gaussian DiRRAc reformulation] \label{thm:dro-gauss}
    The Gaussian DiRRAc problem~\eqref{eq:dro-gauss} is equivalent to the finite-dimensional optimization problem
    \be \label{eq:dro-gauss-equiv}
            \Min{x \in \mc X}~ 1 - \ds \sum_{k \in [K]}~\wh p_k \Phi (g_k(x)),
    \ee
    where the function $g_k$ admits the closed-form expression
    \[
    g_k(x) = 
    \frac{(\wh \theta_k^\top x)^2 - \rho_k^2 \|x\|_2^2} {\wh \theta_k^\top x \sqrt{x^\top \covsa_k x} + \rho_k \|x\|_2 \sqrt{(\wh \theta_k^\top x)^2 + x^\top \covsa_k x - \rho_k^2 \|x\|_2^2} }.
    \]
\end{theorem}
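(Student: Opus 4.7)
The plan is to follow the template of Section~\ref{sec:refor} while substituting the Chebyshev-type bound of Lemma~\ref{lemma:wc-var} with its Gaussian counterpart. The first step exploits the product-form structure of $\mc B^{\mc N}(\Pnom)$: since the mixture weights $\wh p$ are fixed and each component is constrained independently, the law of total probability together with the interchange of sum and supremum gives
\[
\Sup{\QQ \in \mc B^{\mc N}(\Pnom)} \QQ( \mc C_{\tilde \theta}(x) = 0) = \sum_{k \in [K]} \wh p_k \Sup{\QQ_k \in \mc B_k^{\mc N}(\Pnom_k)} \QQ_k(\tilde \theta^\top x \le 0).
\]
Consequently it suffices to derive a closed form for each per-component worst-case probability and to identify the feasibility condition that enforces the $<\tfrac{1}{2}$ constraint.

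The second step rewrites each inner supremum using the Gaussian CDF: for $\QQ_k = \mc N(\theta_k, \Sigma_k)$ we have $\QQ_k(\tilde\theta^\top x \le 0) = \Phi(-\theta_k^\top x / \sqrt{x^\top \Sigma_k x})$, so monotonicity of $\Phi$ turns the task into maximizing the argument over the Gelbrich ball. I would then decouple the Gelbrich budget as $\alpha = \|\theta_k - \wh\theta_k\|_2 \in [0, \rho_k]$ for the mean part and $\eta = \sqrt{\rho_k^2 - \alpha^2}$ for the Bures--Wasserstein part. For fixed $\alpha$, Cauchy--Schwarz yields the worst mean value $\sup(-\theta_k^\top x) = -\wh\theta_k^\top x + \alpha \|x\|_2$, attained at $\theta_k = \wh\theta_k - \alpha x/\|x\|_2$, and the standard Bures--Wasserstein ball identity gives the largest admissible denominator $\sup \sqrt{x^\top \Sigma_k x} = \sqrt{x^\top \covsa_k x} + \eta \|x\|_2$. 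Because the set $\mc X$ in~\eqref{eq:X-def} forces $-\wh\theta_k^\top x + \rho_k \|x\|_2 < 0$, the numerator stays negative for every $\alpha \in [0, \rho_k]$, so maximizing a negative-over-positive ratio indeed calls for the largest positive denominator, and the two inner sups align. The problem thus collapses to the one-dimensional optimization
\[
\Max{\alpha \in [0, \rho_k]} \frac{-\wh\theta_k^\top x + \alpha \|x\|_2}{\sqrt{x^\top \covsa_k x} + \sqrt{\rho_k^2 - \alpha^2}\, \|x\|_2}.
\]

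The third step solves this scalar problem. A direct first-order condition yields a quadratic in $\alpha$ whose admissible root, substituted back, produces the worst-case ratio $-g_k(x)$ after a rationalization by the conjugate of the radical. A slicker alternative mimics Lemma~\ref{lemma:wc-var}: derive the worst-case Gaussian value-at-risk at level $\beta \in (0, \tfrac{1}{2})$, which by the same Gelbrich decomposition plus a Cauchy--Schwarz split on $(\alpha, \eta)$ equals $-\wh\theta_k^\top x + \Phi^{-1}(1-\beta) \sqrt{x^\top \covsa_k x} + \rho_k \|x\|_2 \sqrt{1 + (\Phi^{-1}(1-\beta))^2}$, and then invert at $\tau = 0$ to solve for $\Phi^{-1}(1-\beta) = g_k(x)$. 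Either route gives a worst-case per-component probability of $\Phi(-g_k(x)) = 1 - \Phi(g_k(x))$, which when summed against $\wh p_k$ delivers the objective of~\eqref{eq:dro-gauss-equiv}. Finally, the $<\tfrac{1}{2}$ margin is equivalent (via the same sup expression) to the argument of $\Phi$ being negative, that is, to $-\wh\theta_k^\top x + \rho_k \|x\|_2 < 0$ as in~\eqref{eq:X-def}; this also explains why the margin is $\tfrac{1}{2}$ rather than $1$, since $\Phi(0) = \tfrac{1}{2}$ is what separates $g_k(x)>0$ from $g_k(x) \le 0$. The main obstacle I anticipate is not conceptual but algebraic: rationalizing the radical at the optimizer and choosing the correct root branch of the quadratic so that the simplification exactly matches the stated $g_k(x)$ requires careful bookkeeping, and the Bures--Wasserstein ball identity $\sup \sqrt{x^\top \Sigma x} = \sqrt{x^\top \covsa x} + \eta \|x\|_2$, although standard, must be either cited or established by a reparametrization of $\Sigma$.
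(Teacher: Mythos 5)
Your proposal is correct, and the second (``slicker'') route you mention is precisely the paper's proof: the paper decomposes the mixture supremum exactly as in your first step, defines $f_k^{\mc N}(x) = \sup_{\QQ_k \in \mc B_k^{\mc N}(\Pnom_k)} \QQ_k(\tilde \theta^\top x \le 0)$, rewrites it as an infimum over risk levels $\beta$ of the worst-case Gaussian Value-at-Risk from Lemma~\ref{lemma:wc-var-gauss}, and finds the root of $A_k + t B_k + C_k\sqrt{1+t^2} = 0$ (via the substitution $\tau = 1/t$) to obtain $f_k^{\mc N}(x) = 1 - \Phi(g_k(x))$; the $<\half$ margin is handled by exactly your dual-norm observation that $\sup(-\theta_k^\top x) = -\wh\theta_k^\top x + \rho_k\|x\|_2$ over the Gelbrich ball. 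Your primary route --- decoupling the Gelbrich budget into a mean part $\alpha$ and a covariance part $\sqrt{\rho_k^2 - \alpha^2}$, evaluating the two inner suprema in closed form, and solving a one-dimensional fractional program over $\alpha \in [0,\rho_k]$ --- is a genuinely different and more self-contained derivation: it avoids invoking the external worst-case VaR lemma, but must instead establish the Bures-ball identity $\sup \sqrt{x^\top \Sigma x} = \sqrt{x^\top \covsa_k x} + \eta\|x\|_2$ and carry the first-order-condition algebra to the stated $g_k$; one should also confirm the optimizer is interior to $[0,\rho_k]$, which holds because the ratio is increasing at $\alpha = 0$ while its derivative tends to $-\infty$ as $\alpha \uparrow \rho_k$. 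The paper's route buys a two-line reduction at the cost of citing Lemma~\ref{lemma:wc-var-gauss}; yours makes explicit how the adversary optimally splits its budget between shifting the mean and inflating the variance, which is arguably more instructive even if algebraically heavier.
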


Problem~\eqref{eq:dro-gauss-equiv} can be solved using the projected gradient descent algorithm discussed in Section~\ref{sec:pgd}.

\subsection{Proof Sketch} \label{sec:gauss-proof}
The proof of Theorem~\ref{thm:dro-gauss} relies on the following analytical form of the worst-case Value-at-Risk (VaR) under parametric Gaussian ambiguity set~\citep[Lemma~3.31]{ref:nguyen2019adversarial}.
\begin{lemma}[Worst-case Gaussian VaR] \label{lemma:wc-var-gauss}
    For any $x \in \R^d$ and $\beta \in (0, \half]$, let $t = \Phi^{-1}(1-\beta)$. Then
    \be \label{eq:wc-var-gauss}
        \inf \left\{ \tau : \sup_{\QQ_k \in \mc B_k^{\mc N}(\Pnom_k)} \QQ_k (\tilde \theta^\top x \le -\tau) \le \beta \right\} = - \wh \theta_k^\top x + t \sqrt{x^\top \covsa_k x} + \rho \sqrt{1+t^2} \|x\|_2.
    \ee
\end{lemma}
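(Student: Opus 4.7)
The plan is to reduce the inner distributional supremum to a finite-dimensional optimization in the Gaussian moments $(\theta_k, \Sigma_k)$, and then to decouple the Gelbrich ball into a mean part and a covariance (Bures) part. Since $\QQ_k = \mc N(\theta_k, \Sigma_k)$ implies $\tilde\theta^\top x \sim \mc N(\theta_k^\top x,\, x^\top \Sigma_k x)$, the probability $\QQ_k(\tilde\theta^\top x \le -\tau)$ equals $\Phi\bigl((-\tau - \theta_k^\top x)/\sqrt{x^\top \Sigma_k x}\bigr)$. Using strict monotonicity of $\Phi$ and noting that $t = \Phi^{-1}(1-\beta) \ge 0$ whenever $\beta \le \half$, the constraint $\sup_{\QQ_k \in \mc B_k^{\mc N}(\Pnom_k)} \QQ_k(\tilde\theta^\top x \le -\tau) \le \beta$ becomes the uniform inequality $\tau \ge -\theta_k^\top x + t\sqrt{x^\top \Sigma_k x}$ over all admissible $(\theta_k, \Sigma_k)$. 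Consequently, the left-hand side of~\eqref{eq:wc-var-gauss} equals
\[
\sup\Bigl\{-\theta_k^\top x + t\sqrt{x^\top \Sigma_k x}~:~\Gelbrich\bigl((\theta_k, \Sigma_k),(\wh\theta_k, \covsa_k)\bigr) \le \rho\Bigr\}.
\]

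Next, I would split the Gelbrich budget by letting $s = \|\theta_k - \wh\theta_k\|_2$ and letting $b$ be the Bures (covariance) component of the Gelbrich distance, subject to $s^2 + b^2 \le \rho^2$ with $s, b \ge 0$. Holding $s$ fixed, the mean subproblem $\max_{\theta_k : \|\theta_k - \wh\theta_k\|_2 \le s} -\theta_k^\top x$ is resolved by Cauchy-Schwarz at the optimal value $-\wh\theta_k^\top x + s\|x\|_2$. Holding $b$ fixed, the covariance subproblem $\max_{\Sigma_k} t\sqrt{x^\top \Sigma_k x}$ invokes the tight Bures inequality $\sqrt{x^\top \Sigma_k x} \le \sqrt{x^\top \covsa_k x} + b\|x\|_2$; this holds because the Bures distance coincides with the 2-Wasserstein distance between zero-mean Gaussians, so projecting an optimal coupling onto the direction $x$ contracts the squared distance, and equality is attained along the Bures geodesic from $\covsa_k$ in the ``direction'' of $x$. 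Multiplying by $t \ge 0$ yields $t\sqrt{x^\top \covsa_k x} + t b\|x\|_2$, and combining with the mean piece gives $-\wh\theta_k^\top x + t\sqrt{x^\top \covsa_k x} + (s + t b)\|x\|_2$.

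Finally, the remaining maximization over the budget split reduces to maximizing $(1, t)\cdot(s, b)$ on the quarter disc $\{(s,b) \ge 0 : s^2 + b^2 \le \rho^2\}$, which by Cauchy-Schwarz equals $\rho\sqrt{1+t^2}$, attained at $(s,b) = \rho(1,t)/\sqrt{1+t^2}$. Substituting back delivers exactly the right-hand side of~\eqref{eq:wc-var-gauss}. The only non-elementary ingredient is the tight Bures inequality, which I expect to be the main obstacle; I would either cite the standard proof via Brenier's optimal transport map between zero-mean Gaussians, or supply a short self-contained argument based on the eigendecomposition of $\covsa_k^{-1/2}\Sigma_k\covsa_k^{-1/2}$ under the strict positive-definiteness of $\covsa_k$. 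The restriction $\beta \le \half$ is used exactly once, to ensure $t \ge 0$ so that the covariance-side inequality is a max-producing upper bound rather than a min-producing lower bound; this explains why~\eqref{eq:dro-gauss} carries the $\half$ margin in its last constraint, in contrast to the $1$ margin of~\eqref{eq:dro}.
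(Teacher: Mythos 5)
Your proof is correct. Note that the paper does not actually prove this lemma itself: it imports it verbatim as Lemma~3.31 of \citet{ref:nguyen2019adversarial}, so there is no in-paper argument to compare against; what you have written is a self-contained derivation of the cited black box. Your route is the standard one for such worst-case VaR formulas and every step checks out: the reduction of the chance constraint to the uniform inequality $\tau \ge -\theta_k^\top x + t\sqrt{x^\top \Sigma_k x}$ is valid precisely because $t = \Phi^{-1}(1-\beta) \ge 0$ on $\beta \in (0,\half]$; the Gelbrich constraint is exactly $s^2 + b^2 \le \rho_k^2$ with $s$ the Euclidean mean deviation and $b$ the Bures deviation, and since the objective is separable in $(\theta_k,\Sigma_k)$ the budget split is lossless; the mean subproblem is a dual-norm evaluation; and the final Cauchy--Schwarz step over the quarter disc gives $\rho_k\sqrt{1+t^2}$. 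The only nontrivial ingredient is, as you say, the tight inequality $\sqrt{x^\top \Sigma_k x} \le \sqrt{x^\top \covsa_k x} + b\|x\|_2$. The upper bound follows from the Lipschitz pushforward argument you sketch, and attainment can be made fully explicit without invoking geodesics: with $v = x/\|x\|_2$ and $\Sigma_\gamma = (I+\gamma vv^\top)\covsa_k(I+\gamma vv^\top)$ one computes $(\covsa_k^{1/2}\Sigma_\gamma\covsa_k^{1/2})^{1/2} = \covsa_k^{1/2}(I+\gamma vv^\top)\covsa_k^{1/2}$, hence the Bures distance equals $\gamma\sqrt{v^\top\covsa_k v}$ while $\sqrt{x^\top\Sigma_\gamma x} = (1+\gamma)\sqrt{x^\top\covsa_k x}$, and choosing $\gamma = b\|x\|_2/\sqrt{x^\top\covsa_k x}$ achieves equality. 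Supplying that two-line computation would make the proof entirely elementary and close the one gap you flagged as the ``main obstacle.'' Your closing remark correctly explains why the Gaussian formulation~\eqref{eq:dro-gauss} must cap the component constraint at $\half$.
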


It is important to note that Lemma~\ref{lemma:wc-var-gauss} is only valid for $\beta \in (0, 0.5]$. Indeed, for $\beta > \half$, evaluating the infimum problem in the left-hand side of~\eqref{eq:wc-var-gauss} requires solving a \textit{non-convex} optimization problem as $t = \Phi^{-1}(1 - \beta) < 0$. As a consequence, the last constraint of the Gaussian DiRRAc formulation~\eqref{eq:dro-gauss} is capped at a probability value of $0.5$ to ensure the convexity of the feasible set in the reformulation~\eqref{eq:dro-gauss-equiv}.
The proof of Theorem~\ref{thm:dro-gauss} follows a similar line of argument as for the DiRRAc formulation, with $g_k$ being the worst-case Gaussian probability
\[
        g_k(x) \Let \Sup{\QQ_k \in \mc B_k^{\mc N}(\Pnom_k)} \QQ_k ( \mc C_{\tilde \theta}(x) = 0 ) = \Sup{\QQ_k \in \mc B_k^{\mc N}(\Pnom_k)} \QQ_k (\tilde \theta^\top x \le 0) \qquad \forall k \in [K].
    \]
To conclude this section, we provide a quick sanity check: by setting $K = 1$ and $\rho_1 = 0$, we have a special case in which $\tilde \theta$ follows a Gaussian distribution $\mc N(\msa_1, \covsa_1)$. Thus, $\tilde \theta^\top x \sim \mc N(\msa_1^\top x, x^\top \covsa_1 x)$ and it is easy to verify from the formula of $g_1$ in the statement of Theorem~\ref{thm:dro-gauss} that $g_1(x) = (\wh \theta_1^\top x)/ (x^\top \covsa_1 x)^\half$, which recovers the value of $\mathrm{Pr}(\tilde \theta^\top x \le 0)$ under the Gaussian distribution. 

\section{Numerical Experiments} \label{sec:experiment}

We compare extensively the performance of our DiRRAc model~\eqref{eq:dro} and Gaussian DiRRAc model~\eqref{eq:dro-gauss} against four strong baselines: ROAR~\citep{ref:upadhyay2021towards}, CEPM~\citep{ref:pawelczyk2020Once}, AR~\citep{ref:ustun2019actionable} and Wachter~\citep{ref:wachter2018counterfactual}. We conduct the experiments on three real-world datasets (German, SBA, Student). Appendix~\ref{sec:app-exp} provides further comparisons with more baselines:~\citet{ref:nguyen2022robust},~\citet{ref:karimi2021algorithmic} and ensemble variants of ROAR, along with the sensitivity analysis of hyperparameters. Appendix~\ref{sec:app-exp} also contains the details about the datasets and the experimental setup.



    
\textbf{Metrics.} For all experiments, we use the $l_1$ distance $c(x, x_0) = \|x - x_0\|_1$ as the \textit{cost function}. Each dataset contains two sets of data (the present and shifted data). The present data is to train the current classifier for which recourses are generated while the remaining data is used to measure the validity of the generated recourses under model shifts. We choose 20\% of the shifted data randomly 100 times and train 100 classifiers respectively. The validity of a recourse is computed as the fraction of the classifiers for which the recourse is valid. We then report the average of the validity of all generated recourses and refer this value as \textit{$M_2$ validity}. We also report \textit{$M_1$ validity}, which is the fraction of the instances for which the recourse is valid with respect to the original classifier.

\textbf{Results on real-world data.} We use three real-world datasets which capture different data distribution shifts~\citep{ref:dua2019germandata}: (i) the German credit dataset, which captures a correction shift. (ii) the Small Business Administration (SBA) dataset, which captures a temporal shift. (iii) the Student performance dataset, which captures a geospatial shift. Each dataset contains original data and shifted data. We normalize all continuous features to $[0, 1]$. Similar to~\citet{ref:mothilal2020explaining}, we use one-hot encodings for categorial features, then consider them as continuous features in $[0, 1]$. To ease the comparison, we choose $K = 1$. The choices of $K$ are discussed further in Appendix~\ref{sec:app-exp}. 

 \begin{figure}[H]
    \vspace{-5mm}
    \centering
     \includegraphics[width=1.0\linewidth]{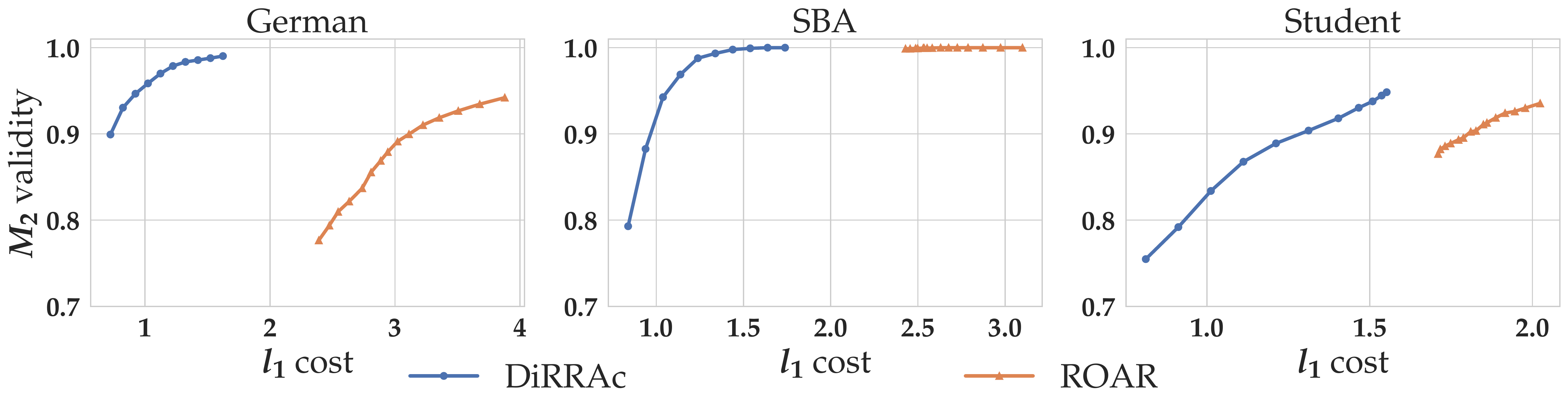}
    \caption{Comparison of $M_2$ validity as a function of the $l_{1}$ distance between input instance and the recourse for our DiRRAc method and ROAR on real datasets.}
    \label{fig:cost_robustness_trade_off}
\end{figure}

    \begin{table*}[htb]
    \vspace{-7mm}
    \centering
    \caption{Benchmark of $M_{1}$  and $M_{2}$ validity, $l_{1}$ and $l_{2}$ cost  for linear models on real datasets.}
    \label{tab:validitytable}
    \small
    \pgfplotstabletypeset[
        col sep=comma,
        string type,
        every head row/.style={before row=\toprule,after row=\midrule},
        every row no 5/.style={after row=\midrule},
        every row no 11/.style={after row=\midrule},
        every last row/.style={after row=\bottomrule},
        columns/data/.style={column name=Dataset, column type={l}},
        columns/mt/.style={column name=Methods, column type={l}},
        columns/val_m1/.style={column name=$M_{1}$ validity, column type={c}},
        columns/val/.style={column name=$M_{2}$ validity, column type={c}},
        columns/l1/.style={column name=$l_{1}$ cost, column type={c}},
        columns/l2/.style={column name=$l_{2}$ cost, column type={c}},
    ]{validity_linear.csv}
    \vspace{-3mm}
    \end{table*}

We split 80\% of the original dataset and train a logistic classifier. This process is repeated 100 times independently to obtain 100 observations of the model parameters. Then we compute the empirical mean and covariance matrix for $(\wh \theta_1, \covsa_1)$. To evaluate the trade-off between $l_{1}$ cost and $M_{2}$ validity of DiRRAc and ROAR, we compute $l_{1}$ cost and the $M_{2}$ validity by running DiRRAc with varying values of $\delta_{\add}$ and ROAR with varying values of $\lambda$. We define $\delta = \delta_{\min} + \delta_{\add}$, $\delta_{\min}$ is specified in~\eqref{eq:delta_min}. Figure~\ref{fig:cost_robustness_trade_off} shows that the frontiers of DiRRAc dominate the frontiers of ROAR. This indicates that DiRRAc achieves a far smaller $l_{1}$ cost for the robust recourses than ROAR.  Next, we evaluate the $l_1$ and $l_2$ cost, $M_1$ and $M_2$ validity of DiRRAc, ROAR and other baselines. The results in Table~\ref{tab:validitytable} demonstrate that DiRRAc has high validity in all three datasets while preserving low costs ($l_1$ and $l_2$ cost) in comparison to ROAR. Our DiRRAc framework consistently outperforms the AR, Wachter, and CEPM in terms of $M_2$ validity. 

\textbf{Nonlinear models.} Following the previous work as in \citet{ref:rawal2021algorithmic, ref:upadhyay2021towards} and \citet{ref:bui2022counterfactual}, we adapt our DiRRAc framework and other baselines (AR and ROAR) to non-linear models by first generating local linear approximations using LIME~\citep{ref:marco2016lime}. For each instance $x_{0}$, we first generate a local linear model for the MLPs classifier 10 times using LIME, each time using $1000$ perturbed samples. To estimate $(\wh \theta_1, \covsa_1)$, we compute the mean and covariance matrix of parameters $\theta_{x_0}$ of 10 local linear models. We randomly choose 10\% of the shifted dataset and concatenate with training data of the original dataset 10 times, then train a shifted MLPs classifier. According to Table~\ref{tab:nonlinearvaliditytable}. On the German Credit and Student dataset, DiRRAc has a higher $M_2$ validity than other baselines, and a slightly lower $M_2$ validity on the SBA dataset than ROAR, while maintaining a low $l_1$ cost relative to ROAR and CEPM.

\begin{table*}[htb]
    \centering
    \vspace{-3mm}
    \caption{Benchmark of $M_{1}$ and $M_{2}$ validity, $l_{1}$ and $l_{2}$ cost for non-linear models on real datasets.}
    \label{tab:nonlinearvaliditytable}
    \small
    \pgfplotstabletypeset[
        col sep=comma,
        string type,
        every head row/.style={before row=\toprule,after row=\midrule},
        every row no 5/.style={after row=\midrule},
        every row no 11/.style={after row=\midrule},
        every last row/.style={after row=\bottomrule},
        columns/data/.style={column name=Dataset, column type={l}},
        columns/mt/.style={column name=Methods, column type={l}},
        columns/val_m1/.style={column name=$M_{1}$ validity, column type={c}},
        columns/val/.style={column name=$M_{2}$ validity, column type={c}},
        columns/l1/.style={column name=$l_{1}$ cost, column type={c}},
        columns/l2/.style={column name=$l_{2}$ cost, column type={c}},
    ]{validity_non_linear.csv}
    \end{table*}






\textbf{Concluding Remarks.} In this work, we proposed the  Distributionally Robust Recourse Action (DiRRAc) framework to address the problem of recourse robustness under shifts in the parameters of the classification model. We introduced a distributionally robust optimization approach for generating a robust recourse action using a projected gradient descent algorithm. The experimental results demonstrated that our framework has the ability to generate the recourse action that has high probability of being valid under different types of data distribution shifts with a low cost. We also showed that our framework can be adapted to different model types, linear and non-linear models, and allows for actionability constraints of the recourse action.

\begin{remark}[Extensions] \label{rem:extension}
    The DiRRAc framework can be extended to hedge against the misspecification  of the mixture weights $\wh p$. Alternatively, the objective function of DiRRAc can be modified to minimize the worst-case component probability. These extensions are explored in Section~\ref{sec:app-extension}. Corresponding extensions for the Gaussian DiRRAc framework are presented in Section~\ref{sec:app-extension-Gauss}.
\end{remark}

\begin{remark}[Choice of ambiguity set]
    This paper's results rely fundamentally on the design of ambiguity sets using a Gelbrich distance on the moment space. This Gelbrich ambiguity set leads to the $\|\cdot\|_2$-regularizations of the worst-case Value-at-Risk in Lemmas~\ref{lemma:wc-var} and \ref{lemma:wc-var-gauss}. If we consider other moment ambiguity sets, for example, the moment bounds in~\citet{ref:delage2010distributionally} or the Kullback-Leibler-type sets in~\citet{ref:taskesen2021sequential}, then these regularization equivalence are not available, and there is no trivial way to extend the results to reformulate the (Gaussian) DiRRAc framework.
\end{remark}

\textbf{Acknowledgments.} Viet Anh Nguyen acknowledges the generous support from the CUHK's Improvement on Competitiveness in Hiring New Faculties Funding Scheme.

\newpage
\onecolumn
\appendix

\section{Additional Experiment Results} \label{sec:app-exp}
Here, we provide further details about the datasets, experimental settings, and additional results. Source code can be found at \url{https://github.com/duykhuongnguyen/DiRRAc}.

\subsection{Datasets}
\textbf{Real-world datasets.} We use three real-world datasets which are popular in the settings of robust algorithmic recourse:
German credit~\citep{ref:dua2019germandata}, SBA~\citet{ref:li2008should}, and Student performance~\citet{ref:cortez2008student}. We select a subset of features from each dataset:
\begin{itemize}[leftmargin = 5mm]
    \item For the German credit dataset from the UCI repository, we choose five features: Status, Duration, Credit amount, Personal Status, and Age. We found in the descriptions of two datasets that feature Status in the data correction shift dataset corrects the coding errors in the original dataset~\citep{ref:dua2019germandata}. 
    
    \item For the SBA dataset, we follow~\citet{ref:li2008should} and \citet{ref:upadhyay2021towards} and we choose 13 features: Selected, Term, NoEmp, CreateJob, RetainedJob, UrbanRural, ChgOffPrinGr, GrAppv, SBA\_Appv, New, RealEstate, Portion, Recession. We use the instances during 1989-2006 as original data and the remaining instances as shifted data.
    
    \item For the Student Performance dataset, motivated by~\citet{ref:cortez2008student}, we choose G3 - final grade for deciding the label pass or fail for each student. The student who has G3 $< 12$ is labeled 0 (failed) and 1 (passed) otherwise. For input features, we choose 9 features: Age, Study time, Famsup, Higher, Internet, Health, Absences, G1, G2. We separate the dataset into the original and the geospatial shift data by 2 different schools.
\end{itemize}

We report the accuracy of the current classifiers and shifted classifiers for two types of models: logistics classifiers (LR) and MLPs classifiers (MLPs) on each dataset in Table~\ref{tab:acc_clf}.
\begin{table*}[htb]
    \centering
    \caption{Accuracy of the underlying classifiers.}
    \label{tab:acc_clf}
    \small
    \pgfplotstabletypeset[
        col sep=comma,
        string type,
        every head row/.style={before row=\toprule,after row=\midrule},
        every row no 3/.style={after row=\midrule},
        every row no 7/.style={after row=\midrule},
        every last row/.style={after row=\bottomrule},
        columns/data/.style={column name=Dataset, column type={l}},
        columns/mt/.style={column name=Methods, column type={l}},
        columns/acc/.style={column name=Accuracy, column type={c}},
    ]{accuracy.csv}
\end{table*}

\textbf{Synthetic data.} We synthesize two-dimensional data and simulate the shifted data by using $K=3$ different shifts similar to~\citet{ref:upadhyay2021towards}: mean shift, covariance shift, mean and covariance shift. First, we fix the unshifted conditional distributions with $X|Y = y \sim \mathcal{N}\left(\mu_{y}, \Sigma_{y}\right)~\forall y \in \mc Y$. For mean shift, we replace $\mu_0$ by $\mu_{0}^{\text{shift}}= \mu_0 + [\alpha, 0]^\top$, where $\alpha$ is a mean shift magnitude. For covariance shift, we replace $\cov_0$ by $\cov_0^{\text{shift}} = (1+\beta) \cov_0$, where $\beta$ is a covariance shift magnitude. For mean and covariance shift, we replace $(\m_0, \cov_0)$ by $(\mu_{0}^{\text{shift}}, \cov_{0}^{\text{shift}})$. We generate 500 samples for each class from the unshifted distribution with $\mu_{0}=[-3;-3]$, $\mu_{1}=[3;3]$, and $\Sigma_{0} = \Sigma_{1} = I$.

To visualize the decision boundaries of the linear classifiers for synthetic data, we synthesize the shifted data in total 100 times including 33 mean shifts, 33 covariance shifts and 34 both shifts, then we visualize the 100 model's parameters in a two-dimensional space in Figure~\ref{fig:synthetic_shift_types} and Figure~\ref{fig:classifiers_2D}.

    \begin{figure}[H]
    \centering
    \subfloat[Original data \label{fig:synthetic_shift_types_a}]{%
        \includegraphics[width=0.3\linewidth]{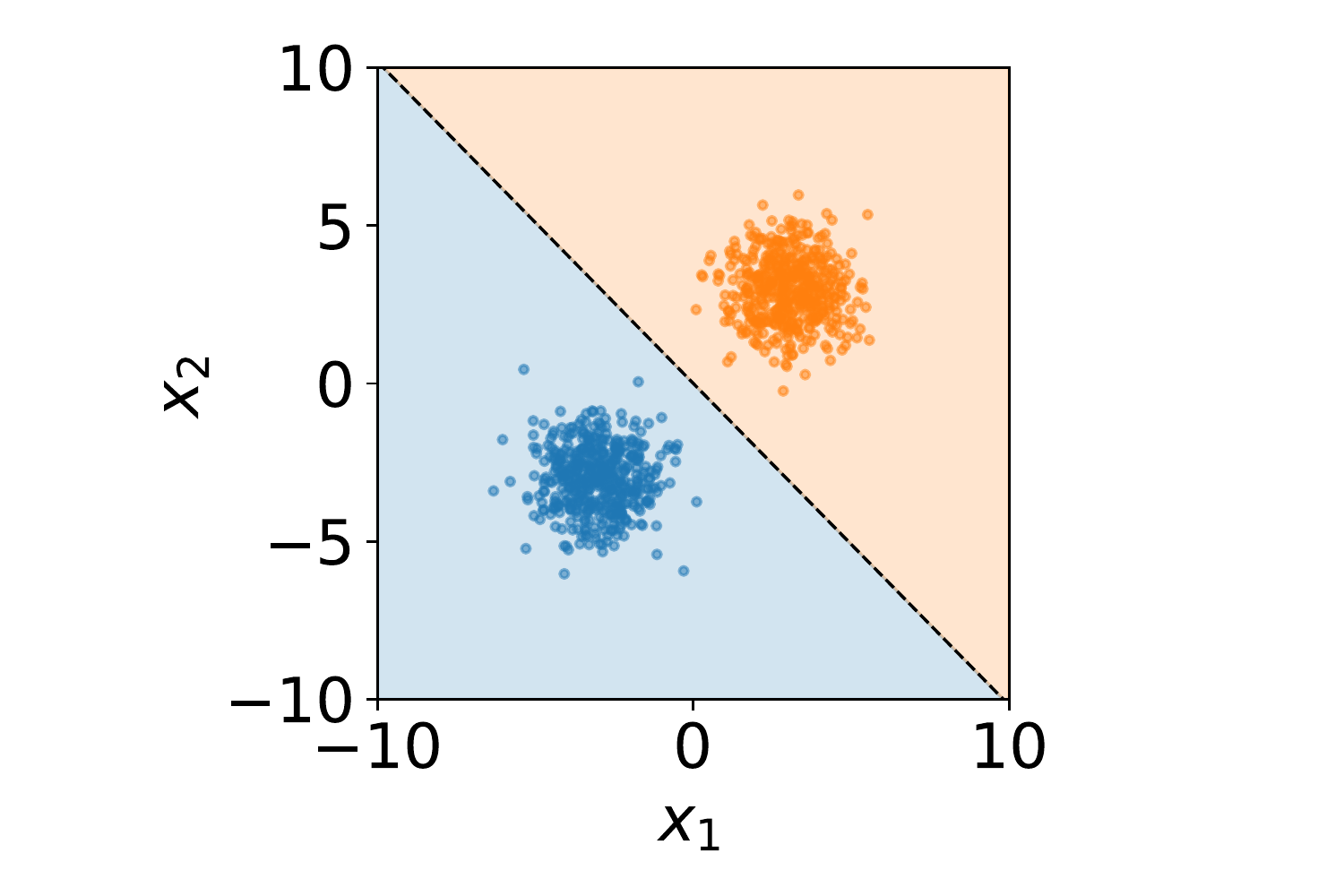}}
    \hskip -8ex
    \subfloat[Mean shift \label{fig:synthetic_shift_types_b}]{%
        \includegraphics[width=0.3\linewidth]{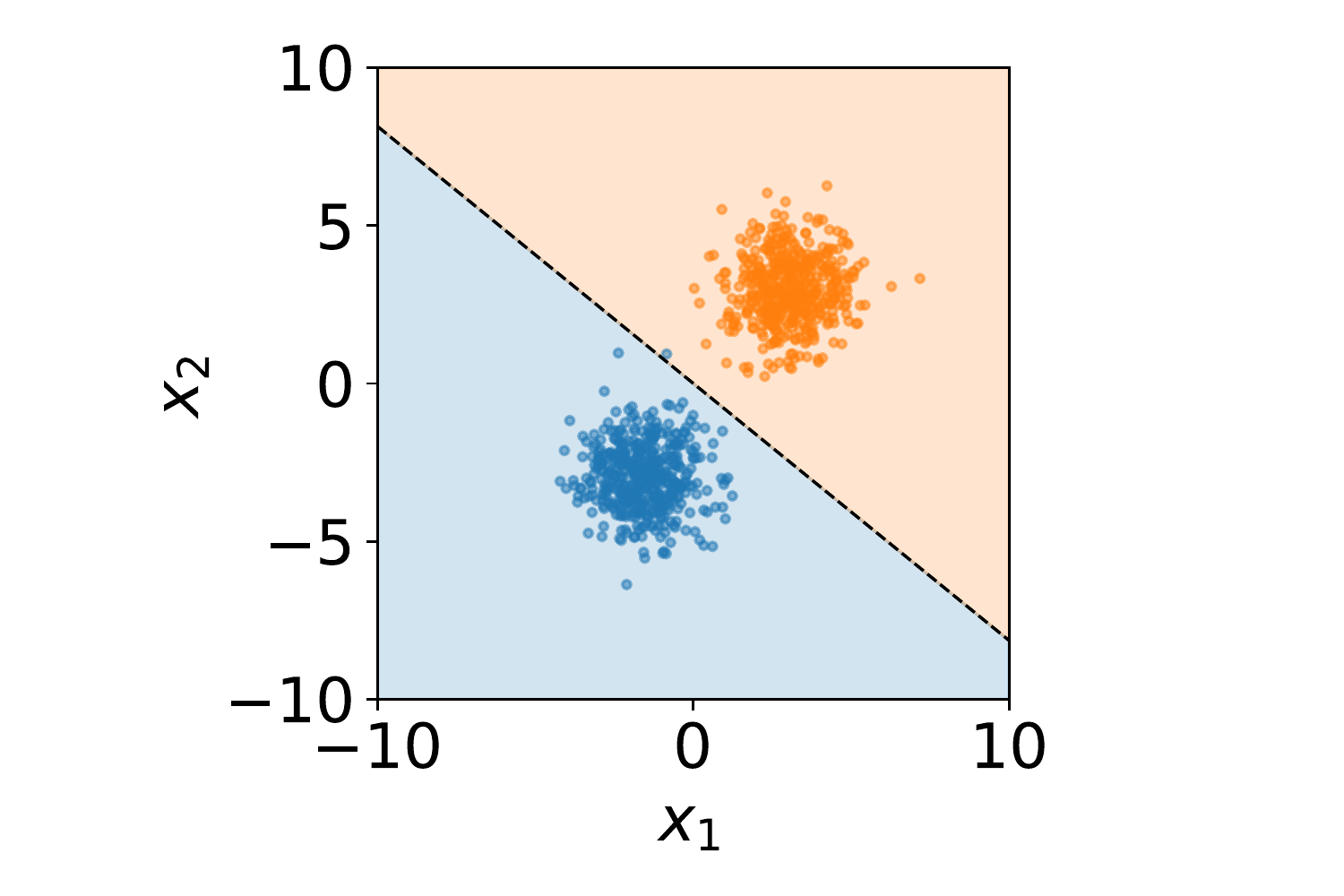}}
    \hskip -8ex
    \subfloat[Covariance shift \label{fig:synthetic_shift_types_c}]{%
        \includegraphics[width=0.3\linewidth]{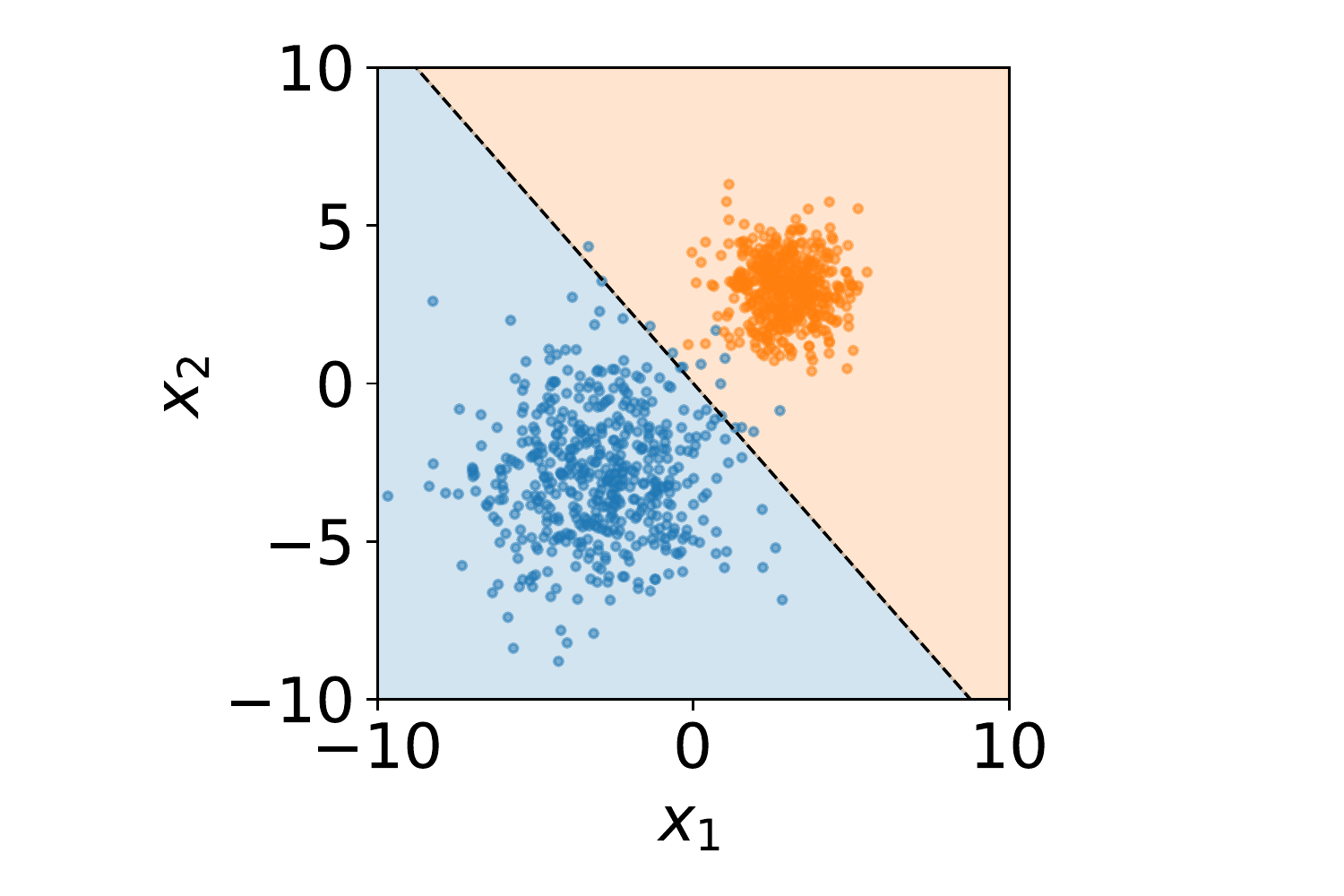}}
    \hskip -8ex
    \subfloat[Both shift \label{fig:synthetic_shift_types_d}]{%
        \includegraphics[width=0.3\linewidth]{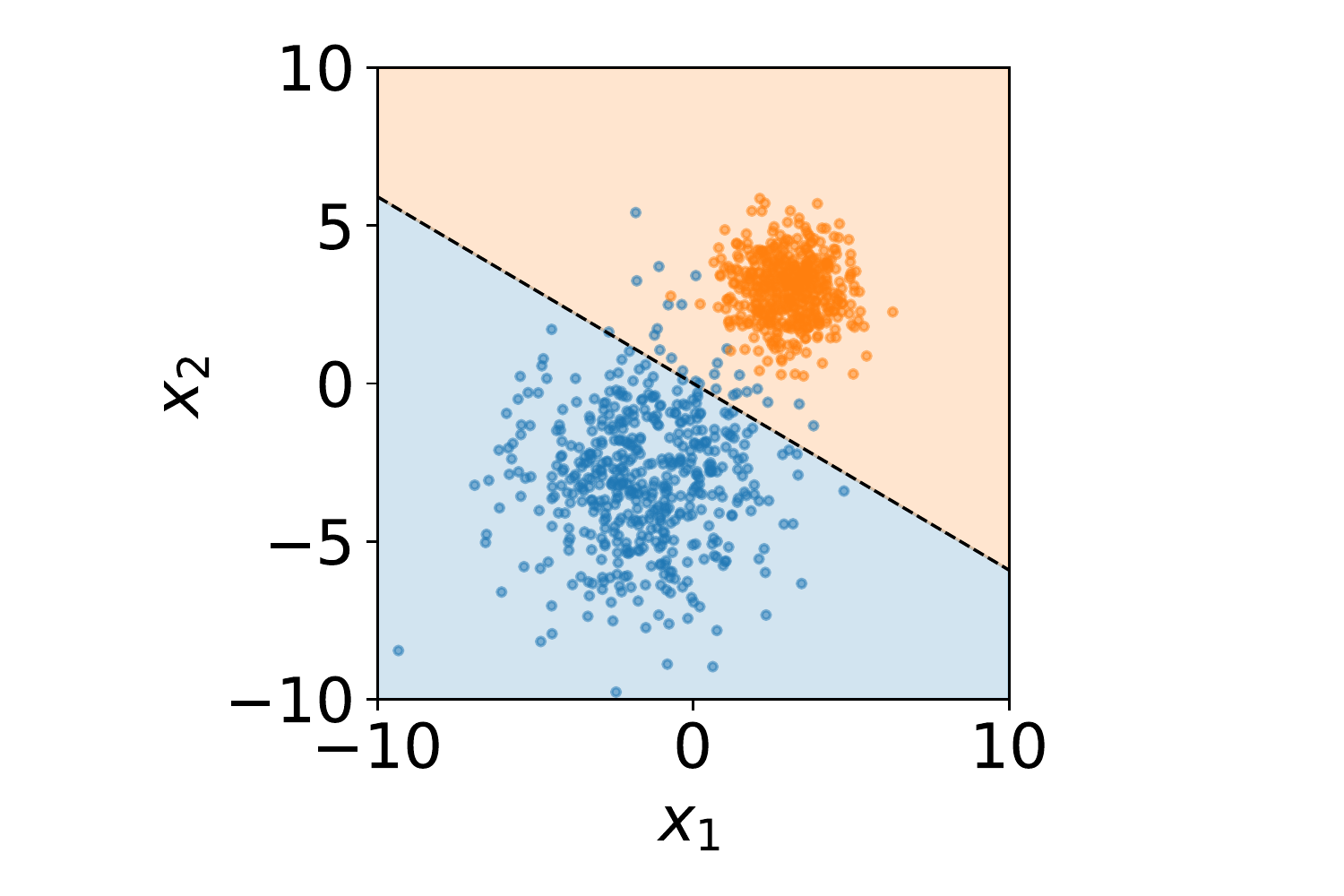}}
    \caption{Synthetic data shifts and the corresponding model parameter shifts (decision boundaries).}
    \label{fig:synthetic_shift_types}
    \end{figure}

    \begin{figure}[H]
    \centering
      \includegraphics[width=0.6\linewidth]{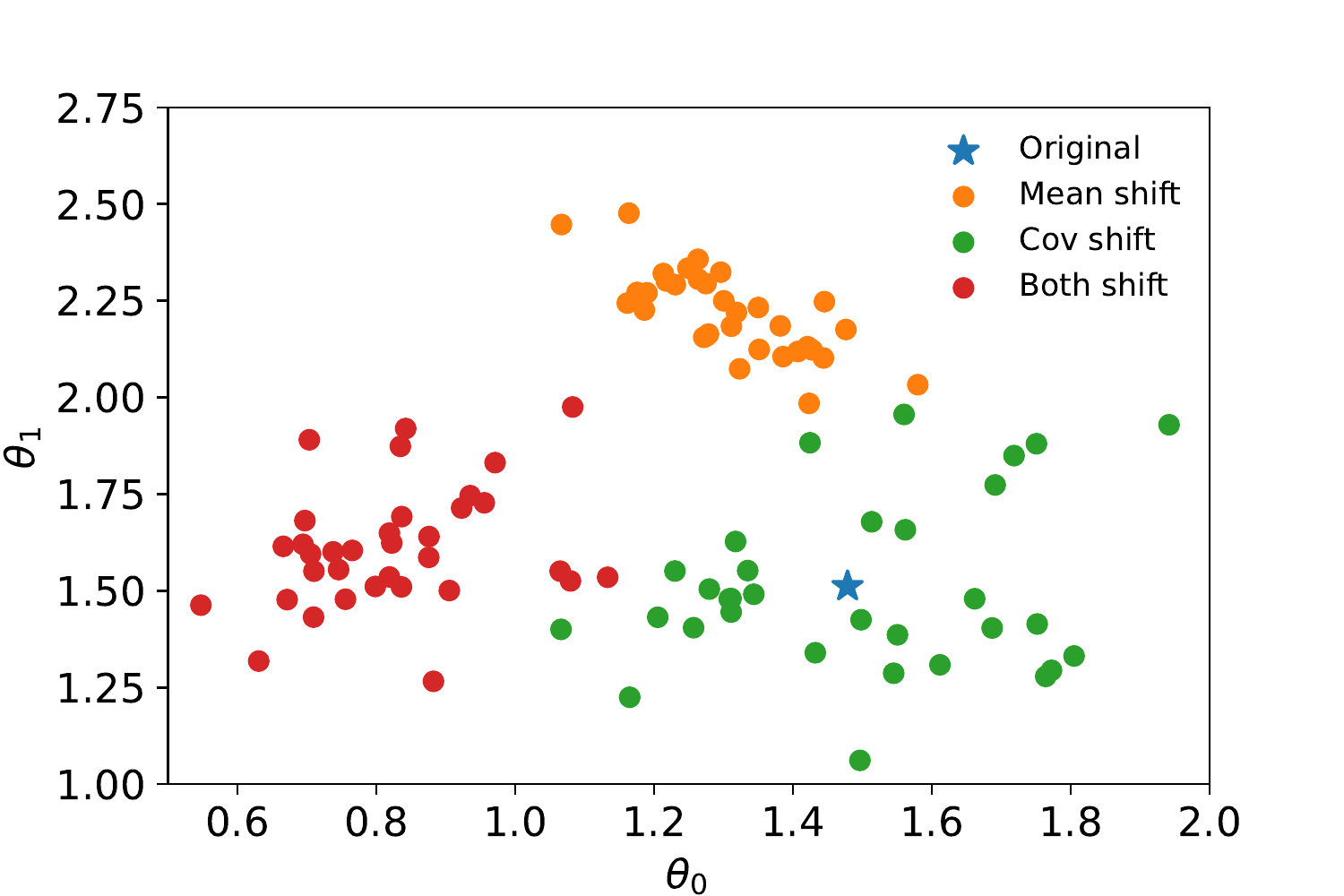}
    \caption{Parameter $\theta$ of the classifier with different types of data distribution shifts.}
    \label{fig:classifiers_2D}
    \end{figure}

\subsection{Experimental settings}
\textbf{Implementation details.} For all the baselines, we use the implementation of CARLA~\citep{ref:pawelczyk2021carla}. We use the hyperparameters of AR, Wachter and CEPM that are provided by CARLA. For ROAR, we use the same parameters as in ROAR~\citep{ref:upadhyay2021towards}.

\textbf{Experimental settings.} The experimental settings for the experiments in the main text are as follows:
\begin{itemize}[leftmargin=5mm]
    \item In Figure~\ref{fig:cost_robustness_trade_off}, we fix $\rho_1=0.1$ and vary $\delta_{\add} \in [0, 2.0]$ for DiRRAc. Then we fix $\delta_{\max}=0.1$ and vary $\lambda \in [0.01, 0.2]$ for ROAR.
    \item In Table~\ref{tab:validitytable} and Table~\ref{tab:nonlinearvaliditytable}, we first initialize $\rho_1=0.1$ and we choose the $\delta_{\add}$ that maximizes the $M_{1}$ validity. We follow the same procedure as in the original paper for ROAR~\citep{ref:upadhyay2021towards}: choose $\delta_{\max}=0.1$ and find the value of $\lambda$ that maximizes the $M_{1}$ validity. The detailed settings are provided in Table~\ref{table1-params}.
\end{itemize}

\begin{table}[H]
    \caption{Parameters for the experiments with real-world data in Table~\ref{tab:validitytable}.}
    \label{table1-params}
    \begin{center}
    \begin{tabular}{cc}
    \toprule
    \bf Parameters & \bf Values \\
    \midrule
    $K$   &$1$\\
    $\delta_{\add}$  &1.0\\
    $\wh p$   &$[1]$\\
    $\rho$  &$[0.1]$\\
    $\lambda$  &$0.7$\\  
    $\zeta$     &$1$\\
    \bottomrule
    \end{tabular}
    \end{center}
\end{table}

\textbf{Choice of number of components $K$ for real-world datasets.}
To choose $K$ for real-world datasets, we use the same procedure in Section~\ref{sec:experiment} to obtain 100 observations of the model parameters. Then we determine the number of components $K$ on these observations by using K-means clustering and Elbow method~\citep{ref:thorndike1953belongs, ref:ketchen1996application}. Then we train a Gaussian mixture model on these observations and obtain $\wh p_k$, $\wh \theta_k$, $\covsa_k$ for the optimal number of components $K$. The Elbow method visualization for each dataset is shown in Figure~\ref{fig:number_component}.


    \begin{figure}[H]
    \centering
      \includegraphics[width=1.0\linewidth]{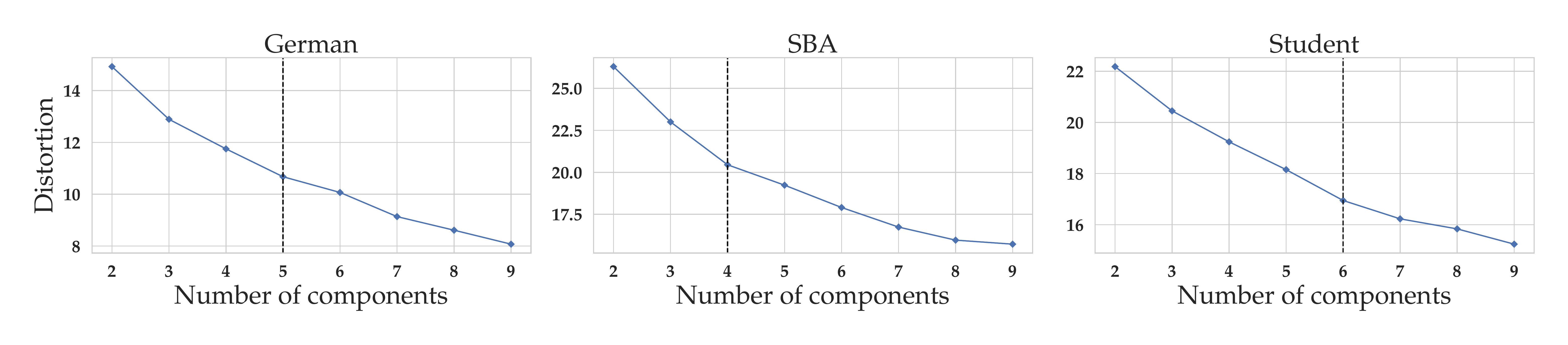}
    \caption{Elbow method for determining the optimal number of components for parameter shifts. Dashed lines represent the optimal $K$ for three real-world datasets. German Credit: Elbow at $K = 5$. SBA: Elbow at $K = 4$. Student Performace: Elbow at $K = 6$.}
    \label{fig:number_component}
    \end{figure}

   \begin{table*}[htb]
    \centering
  \caption{Performance of DiRRAc and Gaussian DiRRAc with $K$ components on three real-world datasets.}
    \label{tab:realdatacomponents}
    \small 
    \pgfplotstabletypeset[
        col sep=comma,
        string type,
        every head row/.style={before row=\toprule,after row=\midrule},
        every row no 1/.style={after row=\midrule},
        every row no 3/.style={after row=\midrule},
        every last row/.style={after row=\bottomrule},
        columns/data/.style={column name=Dataset, column type={l}},
        columns/mt/.style={column name=Methods, column type={l}},
        columns/val_m1/.style={column name=$M_{1}$ validity, column type={c}},
        columns/val/.style={column name=$M_{2}$ validity, column type={c}},
        columns/l1/.style={column name=$l_{1}$ cost, column type={c}},
         columns/l2/.style={column name=$l_{2}$ cost, column type={c}},]{validity_linear_components.csv}
    \end{table*}

The results in Table~\ref{tab:realdatacomponents} indicate that as we deploy our framework with the optimal number of components $K$, then DiRRAc delivers a smaller cost in all three datasets. The $M_{2}$ validity of Gaussian DiRRAc slightly increases in the Student Performance dataset.

\textbf{Sensitivity analysis of hyperparameters $\delta_{\add}$ and $\rho_k$.} Here we analyze the sensitivity of  the hyperparameters $\delta_{\add}$ and $\rho_k$ to the $l_1$ cost of recourses and $M_{2}$ validity of DiRRAc. 

From the results in Figure~\ref{fig:cost_robustness_trade_off}, we can observe that as $\delta_{\add}$ increases, both the cost and the robustness of the recourse increase.

We study the sensitivity of hyperparameters $\rho_k$ to $M_{2}$ validity by first fixing the $\delta_{\add}=0.1$ and vary $\rho_k \in [0.0, 0.5]$. According to Figure~\ref{fig:rhos}, we can observe that as $\rho_k$ increases, the cost of recourses rises as well, yielding in more robust recourses.

 \begin{figure}[H]
    \centering
      \includegraphics[width=1.0\linewidth]{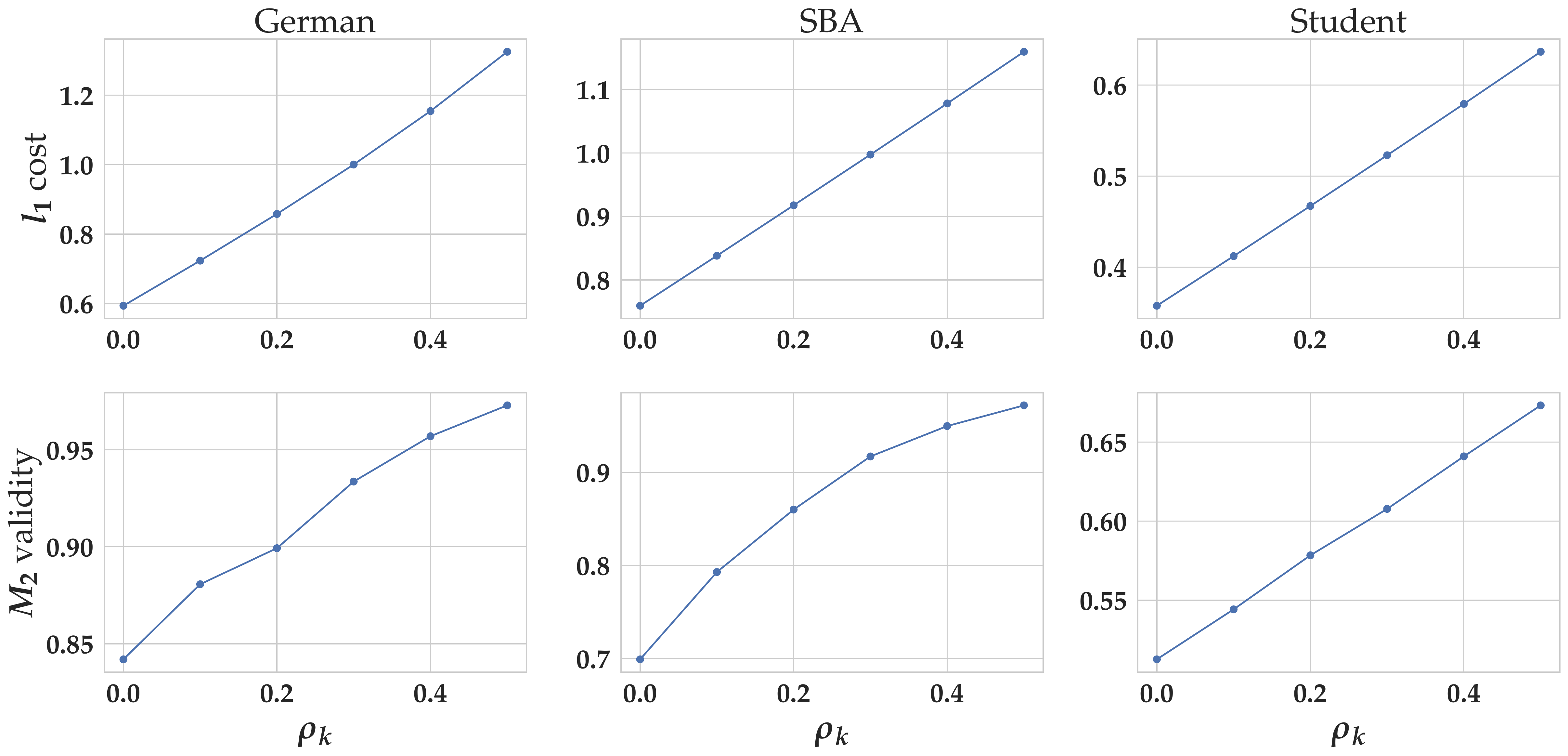}
    \caption{Sensitivity analysis of hyperparameters $\rho_k$ to $l_1$ cost and $M_{2}$ validity of DiRRAc.}
    \label{fig:rhos}
\end{figure}

\subsection{Results on real-world data}

\textbf{Experiments with prior on $\covsa$.} In some cases, we presume, we may not have access to the training data. We set $\wh \theta_{1}=\theta_{0}$, where $\theta_{0}$ is the parameters of the original classifier. Then we choose $\wh \Sigma_{1}=\tau I$ with $\tau=0.1$. We generate recourse for each input instance and compute the $M_{1}$ validity using the original classifier and the $M_{2}$ validity using the shifted classifiers. The results in Table~\ref{tab:identityrealdata} show that our methods produce the same performance while at the same time keeping the $l_{1}$ and $l_{2}$ cost lower than ROAR in all three datasets.

\begin{table*}[htb]
    \centering
    \caption{Benchmark of $M_{1}$ validity, $M_{2}$ validity, $l_{1}$ and $l_{2}$ using $\wh \theta_{1}=\theta_{0}$ and $\covsa_1=0.1  I$ on different real-world datasets.}
    \label{tab:identityrealdata}
    \small
    \pgfplotstabletypeset[
        col sep=comma,
        string type,
        every head row/.style={before row=\toprule,after row=\midrule},
        every row no 2/.style={after row=\midrule},
        every row no 5/.style={after row=\midrule},
        every last row/.style={after row=\bottomrule},
        columns/data/.style={column name=Dataset, column type={l}},
        columns/mt/.style={column name=Methods, column type={l}},
        columns/val_m1/.style={column name=$M_{1}$ validity, column type={c}},
        columns/val/.style={column name=$M_{2}$ validity, column type={c}},
        columns/l1/.style={column name=$l_{1}$ cost, column type={c}},
        columns/l2/.style={column name=$l_{2}$ cost, column type={c}},
    ]{validity_identity_exp.csv}
    \end{table*}

\textbf{Experiments with actionability constraints.}  Using our two methods (DiRRAc and Gaussian DiRRAc) and the AR method~\citep{ref:ustun2019actionable}, we analyze how the actionability constraints affect the cost and validity of the recourse. We select a subset of features from each dataset and define each feature as immutable or non-decreasing as follows:
    \begin{itemize}[leftmargin=5mm]
        \item In the German credit dataset, we select Personal status as an immutable attribute because it is challenging to impose changes in an individual`s status and sex. We view age as a non-decreasing feature.
        \item In the SBA dataset, we select UrbanRural and Recession as two immutable attributes since it will be difficult to change these features in the near future. RetainedJob is another feature that we view as non-decreasing.
        \item In the Student Performance dataset, we assume that a student's Higher education would not change, and select higher education as an immutable feature. Age and Absences are considered as non-decreasing.
    \end{itemize}
    The above specifications are aligned with the existing numerical setup in algorithmic recourse~\citep{ref:ustun2019actionable, ref:rawal2020beyond}.
    
    For each dataset, we run the process of generating the recourse action by adding constraints to the projected gradient descent algorithm. The experimental setup on three different real-world datasets is the same as in Section~\ref{sec:experiment}.
        
    The results in Table~\ref{tab:realdataaction} indicate
    that the $M_{2}$ validity of our 2 methods drops in the German Credit dataset. The validity in shifted data of AR also decreases in this dataset. In other datasets, the performance of our 2 methods remains the same. The $l_{1}$ and $l_{2}$ cost of DiRRAc slightly increase in the Student Performance dataset. Furthermore, there exists recourse for every input instance.

    \begin{table*}[htb]
    \centering
    \caption{Benchmark of $M_{1}$ validity, $M_{2}$ validity, $l_{1}$ and $l_{2}$ using actionability constraints on different real-world datasets.}
    \label{tab:realdataaction}
    \small
    \pgfplotstabletypeset[
        col sep=comma,
        string type,
        every head row/.style={before row=\toprule,after row=\midrule},
        every row no 2/.style={after row=\midrule},
        every row no 5/.style={after row=\midrule},
        every last row/.style={after row=\bottomrule},
        columns/data/.style={column name=Dataset, column type={l}},
        columns/mt/.style={column name=Methods, column type={l}},
        columns/val_m1/.style={column name=$M_{1}$ validity, column type={c}},
        columns/val/.style={column name=$M_{2}$ validity, column type={c}},
        columns/l1/.style={column name=$l_{1}$ cost, column type={c}},
        columns/l2/.style={column name=$l_{2}$ cost, column type={c}},
        columns/feasible/.style={column name=Feasible, column type={c}},
    ]{validity_linear_action.csv}
    \end{table*}

\textbf{Comparison with RBR.}
Here we compare our approach on the nonlinear model settings to a more recent approach on robust recourse~\citep{ref:nguyen2022robust}.

\begin{table*}[htb]
    \centering
    \caption{Comparison with RBR for non-linear models on real datasets.}
    \label{tab:rbr}
    \small
    \pgfplotstabletypeset[
        col sep=comma,
        string type,
        every head row/.style={before row=\toprule,after row=\midrule},
        every row no 2/.style={after row=\midrule},
        every row no 5/.style={after row=\midrule},
        every last row/.style={after row=\bottomrule},
        columns/data/.style={column name=Dataset, column type={l}},
        columns/mt/.style={column name=Methods, column type={l}},
        columns/val_m1/.style={column name=$M_{1}$ validity, column type={c}},
        columns/val/.style={column name=$M_{2}$ validity, column type={c}},
        columns/l1/.style={column name=$l_{1}$ cost, column type={c}},
        columns/l2/.style={column name=$l_{2}$ cost, column type={c}},
    ]{rbr.csv}
    \end{table*}

We provide the results in Table~\ref{tab:rbr}: we can observe that RBR has (nearly) perfect $M_1$ validity. This result is natural because RBR is designed to handle the nonlinear predictive model directly. Our methods do not have the perfect $M_1$ validity because we use the LIME approximation. However, it is important to note that in the problem of robust recourse facing future model shifts, we regard the $M_2$ validity as the most crucial metric because it is the proportion of recourse instances that are valid with respect to the shifted (future) models.

In terms of $l_1$ cost and $M_2$ validity, the results demonstrate that our method has a competitive performance compared to the existing state-of-the-art methods. In particular, LIME-DiRRAc outperforms RBR in terms of $M_2$ validity for two datasets (German and Student). In the SBA dataset, our approach has a lower $M_2$ validity, but the cost of recourses generated by our method is also lower. This result is consistent with our discussion about the $l_1$ cost and $M_2$ validity trade-off in the Appendix.

\textbf{Comparison with MINT on German Credit datasets.} We add a more recent baseline MINT proposed by~\citet{ref:karimi2021algorithmic} for comparison purpose. MINT requires a causal graph; thus, we restrict the experiment to the German Credit dataset (the specifications of the causal graphs are not available for SBA and Student Performance). We do not consider MACE as a baseline for nonlinear model comparison because MACE is not applicable to neural network target models due to its high computational cost. We use the same set of features as in the MINT and ROAR paper~\citep{ref:karimi2021algorithmic, ref:upadhyay2021towards} with four features: Sex, Age, Credit Amount and Duration. The results in Table~\ref{tab:mint} demonstrate that the recourse generated by our framework is more robust to model shifts, but it has a higher $l_1$ cost.
\begin{table*}[htb]
    \centering
    \caption{Comparison with MINT on German Credit dataset.}
    \label{tab:mint}
    \small
    \pgfplotstabletypeset[
        col sep=comma,
        string type,
        every head row/.style={before row=\toprule,after row=\midrule},
        every row no 4/.style={after row=\midrule},
        every row no 9/.style={after row=\midrule},
        every last row/.style={after row=\bottomrule},
        columns/mt/.style={column name=Methods, column type={l}},
        columns/val_m1/.style={column name=$M_{1}$ validity, column type={c}},
        columns/val/.style={column name=$M_{2}$ validity, column type={c}},
        columns/l1/.style={column name=$l_{1}$ cost, column type={c}},
    ]{validity_linear_mint.csv}
\end{table*}

\textbf{Comparison with ensemble baselines.} Prior work suggested that model ensembles can be effective for out-of-distribution prediction~\citep{ref:ovadia2019can, ref:fort2019deep}. Now we explore a model ensemble method to generate recourse based on ROAR as follows. First we follow the procedure in Section~\ref{sec:experiment} to obtain 100 model parameters  $\theta^i$ with $i \in \{1, \ldots, 100\}$. Then we find recourse by solving the following problem:
\[
x^{\prime \prime}=\arg\min\limits_{x^{\prime \prime} \in \mathcal{A}} \max \limits_{\delta \in \Delta}\max\limits_{i \in \{1, \ldots, 100\}} \ell\left(\mathcal{C}_{\theta^i_{\delta}}\left(x^{\prime \prime}\right), 1\right)+\lambda c\left(x_0, x^{\prime \prime}\right),
\]
where $\ell$ is the cross-entropy loss function.

Second, we use the same 100 models and generate recourse for each model independently. Then we average the ROAR recourses across those 100 models as follows. 
\[
x^{\prime \prime}=\frac{1}{100}\sum_{i=1}^{100} \arg\min_{x^{\prime \prime} \in \mathcal{A}} \max_{\delta \in \Delta} \ell\left(\mathcal{C}_{\theta^i_{\delta}} \left(x^{\prime \prime}\right), 1\right)+\lambda c\left(x_0, x^{\prime \prime}\right).
\]

    \begin{table*}[htb]
    \centering
    \caption{Benchmark of different variants of ROAR on three real-world datasets.}
    \label{tab:roarensemble}
    \small
    \pgfplotstabletypeset[
        col sep=comma,
        string type,
        every head row/.style={before row=\toprule,after row=\midrule},
        every row no 4/.style={after row=\midrule},
        every row no 9/.style={after row=\midrule},
        every last row/.style={after row=\bottomrule},
        columns/data/.style={column name=Dataset, column type={l}},
        columns/mt/.style={column name=Methods, column type={l}},
        columns/val_m1/.style={column name=$M_{1}$ validity, column type={c}},
        columns/val/.style={column name=$M_{2}$ validity, column type={c}},
        columns/l1/.style={column name=$l_{1}$ cost, column type={c}},
        columns/l2/.style={column name=$l_{2}$ cost, column type={c}},
        columns/feasible/.style={column name=Feasible, column type={c}},
    ]{roar_ensemble.csv}
    \end{table*}

In Table~\ref{tab:roarensemble}, we provide results for the ROAR ensemble method as ROAR-Ensemble and the average ROAR recourses as ROAR-Avg. From this table, the $M_1$ and $M_2$ validity of ROAR-Ensemble and ROAR-Avg remain the same for all datasets. In almost every benchmark, the recourses generated by those two approaches are more costly than ROAR. In comparison with our framework, our DiRRAc and Gaussian DiRRAc methods demonstrate advantages in terms of the cost of recourses.

\textbf{More discussions about cost-validity trade-off.} Previous work about robust recourses have suggested that recourses are more robust with the expense of higher costs~\citep{ref:rawal2021algorithmic, ref:upadhyay2021towards, ref:pawelczyk2020Once, ref:black2022consistent}. Our results with DiRRAc and Gaussian DiRRAc are consistent with this suggestion. However, our framework can achieve robust and actionable recourses with a far smaller cost than ROAR~\citep{ref:upadhyay2021towards} and CEPM~\citep{ref:pawelczyk2020Once}.

\textbf{Comparison of run time.} Table~\ref{tab:time} reports the average run time: we observe that Wachter has the smallest run time, and our (Gaussian) DiRRAc has a smaller run time than ROAR in all datasets.

 \begin{table*}[htb]
    \centering
     \caption{Average runtime (seconds).}
    \label{tab:time}
    \small
    \pgfplotstabletypeset[
        col sep=comma,
        string type,
        every head row/.style={before row=\toprule,after row=\midrule},
        every row no 4/.style={after row=\midrule},
        every row no 9/.style={after row=\midrule},
        every last row/.style={after row=\bottomrule},
        columns/mt/.style={column name=Methods, column type={l}},
        columns/german/.style={column name=German, column type={l}},
        columns/sba/.style={column name=SBA, column type={l}},
        columns/student/.style={column name=Student, column type={c}},
    ]{run_time.csv}
\end{table*}

\subsection{Results on synthetic data}

 We define the adaptive mean and covariance shift magnitude as $\alpha=\mu_{\mathrm{adapt}} \times iter$, $\beta=\cov_{\mathrm{adapt}} \times iter$ with $\mu_{\mathrm{adapt}}, \cov_{\mathrm{adapt}}$ are the factor of data shifts, $iter$ is the index of iterative loop of synthesizing process.

\begin{figure*}[ht]
    \centering
     \includegraphics[width=0.4\linewidth]{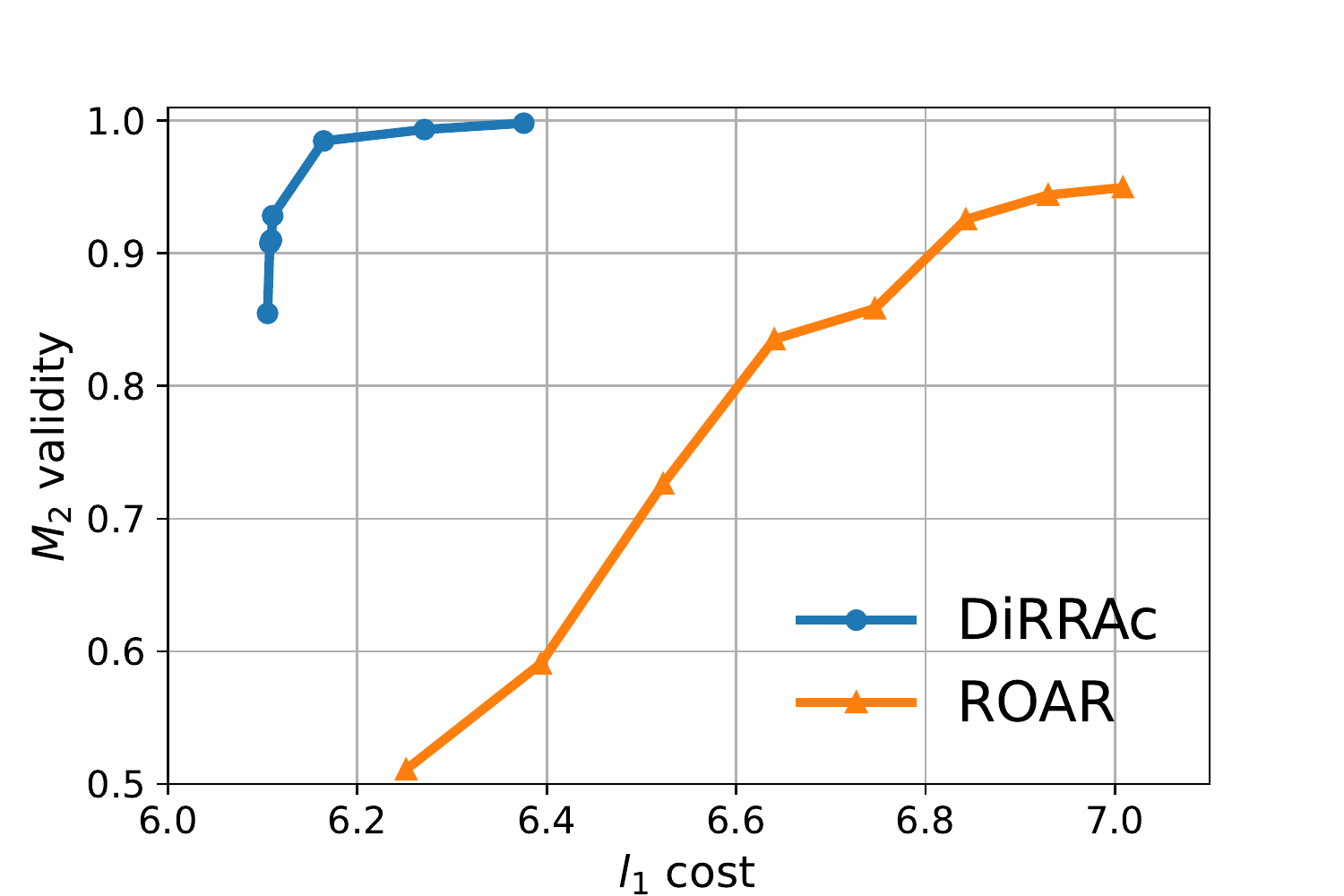}
    \caption{Comparison of $M_2$ validity as a function of the $l_{1}$ distance between input instance and the recourse for our DiRRAc method and ROAR on synthetic data.}
    \label{fig:cost_robustness_compare}
\end{figure*}

For data distribution shifts, we generate mean shifts and covariance shifts 50 times each type with adaptive mean and covariance shift magnitude, with the parameters $\mu_{\mathrm{adapt}} = \cov_{\mathrm{adapt}} = 0.1$. To estimate $\wh \theta_k$ and $\covsa_k$, we define valid mixture weights $\wh p$ and generate data for each component for 100 times with the same ratio as the mixture weight. We train 100 logistic classifiers to compute the empirical mean $\wh \theta_k$ and the empirical covariance matrix $\covsa_k$ for the $k$-th component. We generate a recourse for each test instance that belongs to the negative class. In Figure~\ref{fig:cost_robustness_compare}, we present the results of the cost-robustness analysis of DiRRAc and ROAR on synthetic data.


  \begin{figure*}[ht]
    \centering
      \includegraphics[width=1\linewidth]{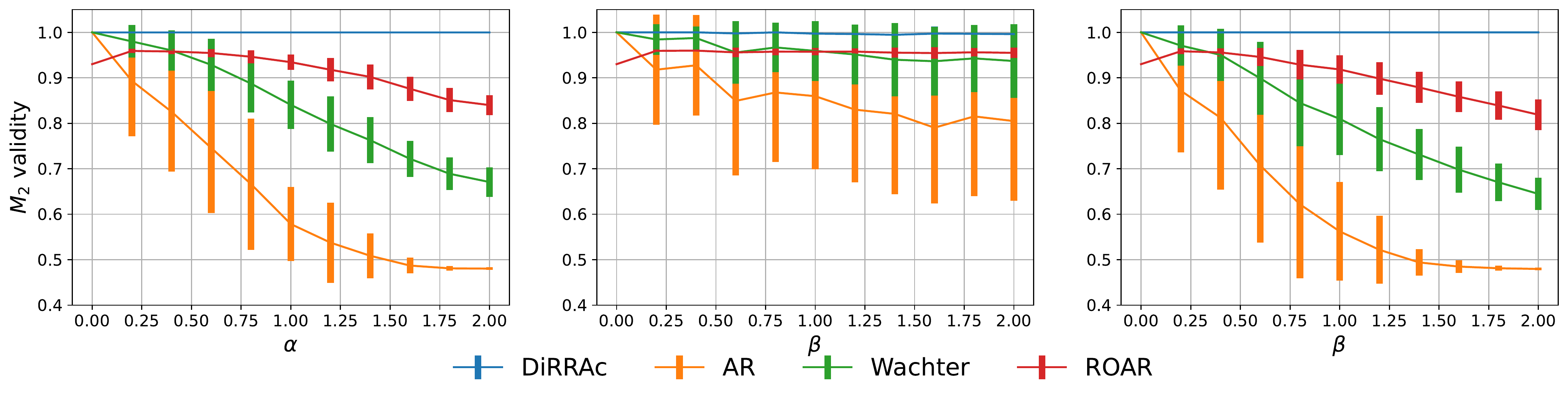}
    \caption{Impact of distribution shifts to the empirical validity. Left: mean shifts parametrized by $\alpha$; Center: covariance shifts parametrized by $\beta$; Right: Mean and covariance shifts with $\alpha = \beta$. }
    \label{fig:impact_shift}
    \end{figure*}




\section{Proofs} \label{sec:app-proof}

\subsection{Proofs of Section~\ref{sec:robust}}

To prove Proposition~\ref{prop:wc-prob}, we are using the notion of Value-at-Risk which is defined as follows.

\begin{definition}[Value-at-Risk] \label{def:VaR}
    For any fixed distribution $\QQ_{k}$ of $\tilde \theta$, the Value-at-Risk at the risk tolerance level $\beta \in (0, 1)$ of the loss $\tilde \theta^\top x$ is defined as
    \[
        \QQ_{k}\text{-}\VaR_{\beta}( \tilde \theta^\top x) \Let \inf\{\tau \in \R: \QQ_{k}( \tilde \theta^\top x \le \tau) \ge 1-\beta\}.
    \]
\end{definition}
We are now ready to provide the proof of Proposition~\ref{prop:wc-prob}.
\begin{proof}[Proof of Proposition~\ref{prop:wc-prob}]
    Using the definition of the Value-at-Risk in Definition~\ref{def:VaR}, we have
        \begin{align*}
            \Sup{\QQ_k \in \mc B_k(\Pnom_k)} \QQ_k (\tilde \theta^\top x \le 0)
            &= \inf \left\{ \beta : \beta \in [0, 1],~\Sup{\QQ_k \in \mc B_k(\Pnom_k)} \QQ_k (\tilde \theta^\top x \le 0) \le \beta \right\} \\
            &= \inf \left\{ \beta : \beta \in [0, 1],~\Sup{\QQ_k \in \mc B_k(\Pnom_k)} \QQ_{k}\text{-}\VaR_{\beta} (-\tilde \theta^\top x) \le 0 \right\}
        \end{align*}
    By~\citet[Lemma~3.31]{ref:nguyen2019adversarial}, we can reformulate the worst-case value-at-risk as
        \[
            \Sup{\QQ_k \in \mc B_k(\Pnom_k)} \QQ_{k}\text{-}\VaR_{\beta} (-\tilde \theta^\top x) = - \wh \theta_k^\top x + \sqrt{\frac{1 - \beta}{\beta}} \sqrt{x^\top \covsa_k x} + \frac{\rho_k}{\sqrt{\beta}} \|x\|_2.
        \]
    It is now easy to observe that in the first case when $-\wh \theta_k^\top x  + \rho_k \|x\|_2 \ge 0$, then we should have $\sup_{\QQ_k \in \mc B_k(\Pnom_k)} \QQ_k (\tilde \theta^\top x \le 0) = 1$.
    
    We now consider the second case when $-\wh \theta_k^\top x  + \frac{\rho_k}{\sqrt{\beta}} \|x\|_2 < 0$. It is easy to see, by the monotocity of the worst-case value-at-risk with respect to $\beta$, that the minimal value $\beta\opt$ should satisfies 
    \[
        - \wh \theta_k^\top x + \sqrt{\frac{1 - \beta\opt}{\beta\opt}} \sqrt{x^\top \covsa_k x} + \frac{\rho_k}{\sqrt{\beta\opt}} \|x\|_2 = 0.
    \]
    Using the transformation $t \leftarrow \sqrt{\beta\opt}$,  we have
    \[
        - \wh \theta_k^\top x t + \sqrt{1 - t^2} \sqrt{x^\top \covsa_k x} + \rho_k \|x\|_2 = 0.            
    \]
    By rearranging terms and then squaring up both sides, we have the equivalent quadratic equation
    \[
        (A_k^2 + B_k^2) t^2 + 2A_k C_k t + C_k^2 - B_k^2 = 0
    \]
    with $A_k \Let - \wh \theta_k^\top x \le 0$, $B_k \Let \sqrt{x^\top \covsa_k x} \ge 0$, and $C_k \Let \rho_k \|x\|_2 \ge 0$ as defined in the statement of the proposition. Note, moreover, that we also have $A_k^2 \ge C_k^2$. This leads to the solution
    \[
        t = \frac{-A_k C_k + B_k \sqrt{A_k^2 + B_k^2 - C_k^2} }{A_k^2 + B_k^2} \ge 0.
    \]
    Thus, we find
    \[
        f_k(x) = \Big(\frac{-A_k C_k + B_k \sqrt{A_k^2 + B_k^2 - C_k^2} }{A_k^2 + B_k^2} \Big)^2.
    \]
    This completes the proof.
\end{proof}

We now provide the proof of Theorem~\ref{thm:equiv}.

\begin{proof}[Proof of Theorem~\ref{thm:equiv}]
    We first consider the objective function $f$ of~\eqref{eq:dro}, which can be re-expressed as
    \begin{align*}
        f(x) &= \Sup{\PP \in \mc B(\Pnom)} \PP ( \mc C_{\tilde \theta}(x) = 0) \\ 
        &= \Sup{\QQ_k \in \mc B_k(\Pnom_k) ~\forall k}~\sum_{k \in [K]} \wh p_k \QQ_k( \tilde \theta^\top x \le 0) \\
        &= \sum_{k \in [K]} \wh p_k \times \Sup{\QQ_k \in \mc B_k(\Pnom_k)} \QQ_k( \tilde \theta^\top x \le 0) \\
        &= \sum_{k \in [K]} \wh p_k \times f_k(x),
    \end{align*}
    where the equality in the second line follows from the non-negativity of $\wh p_k$, and the last equality follows from the definition of $f_k(x)$ in~\eqref{eq:f-def}. Applying Proposition~\ref{prop:wc-prob}, we obtain the objective function of problem~\eqref{eq:equivalent}.
    
    Consider now the last constraint of~\eqref{eq:dro}. Using the result of Proposition~\ref{prop:wc-prob}, this constraint is equivalent to
    \[
        - \wh \theta_k^\top x + \rho_k \| x \|_2 < 0 \qquad \forall k \in [K].
    \]
    This leads to the feasible set $\mc X$ as is defined in~\eqref{eq:X-def}. This completes the proof. 
\end{proof}

    

\subsection{Proofs of Section~\ref{sec:gauss}}

To prove Theorem~\ref{thm:dro-gauss}, we first define the following worst-case Gaussian component probability function
\be \label{eq:f-def-gauss}
    f_k^{\mc N}(x) \Let \Sup{\QQ_k \in \mc B_k^{\mc N}(\Pnom_k)} \QQ_k ( \mc C_{\tilde \theta}(x) = 0 ) = \Sup{\QQ_k \in \mc B_k^{\mc N}(\Pnom_k)} \QQ_k (\tilde \theta^\top x \le 0) \qquad \forall k \in [K] .
\ee
The next proposition provides the reformulation of $f_k^{\mc N}$.
\begin{proposition}[Worst-case probability - Gaussian] \label{prop:wc-prob-gauss}
    For any $x \in \R^d$, any $k \in [K]$ and any $(\wh \theta_k, \covsa_k, \rho_k) \in \R^d \times \PSD^d \times \R_+$, define the following constants $A_k \Let - \wh \theta_k^\top x $, $B_k \Let \sqrt{x^\top \covsa_k x}$, and $C_k \Let \rho_k \|x\|_2$. The following holds:
    \begin{enumerate}[label=(\roman*)]
        \item \label{assert:1} We have $f_k^{\mc N}(x) < \half$ if and only if $A_k + C_k < 0$.
        \item \label{assert:2} If $x$ satisfies $f_k^{\mc N}(x) < \half$, then
        \[
            f_k^{\mc N}(x) = 1 - \Phi \Big( \frac{A_k^2 - C_k^2} {-A_k B_k + C_k \sqrt{A_k^2 + B_k^2 - C_k^2} } \Big).
        \]
    \end{enumerate}
\end{proposition}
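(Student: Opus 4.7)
The plan is to mirror the strategy used in the proof of Proposition~\ref{prop:wc-prob}: re-express the worst-case probability as the infimum of risk levels $\beta$ at which the worst-case Gaussian Value-at-Risk becomes non-positive, then invoke Lemma~\ref{lemma:wc-var-gauss} to obtain an explicit expression in terms of $t = \Phi^{-1}(1-\beta)$, and finally solve the resulting radical equation in closed form.

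Concretely, by Definition~\ref{def:VaR} one has
\[
f_k^{\mc N}(x) \;=\; \inf\Big\{\beta\in(0,1)\;:\; \sup_{\QQ_k \in \mc B_k^{\mc N}(\Pnom_k)} \QQ_k\text{-}\VaR_{\beta}(-\tilde\theta^\top x) \le 0\Big\}.
\]
For any $\beta \in (0, 1/2]$, Lemma~\ref{lemma:wc-var-gauss} with $t = \Phi^{-1}(1-\beta) \ge 0$ evaluates the inner supremum to $A_k + t B_k + C_k\sqrt{1+t^2}$, a continuous function that is strictly increasing in $t \ge 0$ and equals $A_k + C_k$ at $t = 0$. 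Hence the existence of some $\beta \in (0,1/2)$ achieving VaR $\le 0$ is equivalent to the existence of $t^* > 0$ satisfying $B_k t^* + C_k\sqrt{1+(t^*)^2} = -A_k$, which in turn happens iff $A_k + C_k < 0$. When this holds, the minimizing $\beta$ is $\beta^* = 1 - \Phi(t^*) < 1/2$; when it fails, the Lemma shows that even at $\beta = 1/2$ the VaR is non-negative, so $f_k^{\mc N}(x) \ge 1/2$. This establishes assertion~\ref{assert:1}.

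For assertion~\ref{assert:2}, the task reduces to solving $B_k t + C_k\sqrt{1+t^2} = -A_k$ for the unique $t^* > 0$. Isolating $C_k\sqrt{1+t^2} = -A_k - B_k t$ (legitimate on the range $0 \le t \le -A_k/B_k$) and squaring yields the quadratic
\[
(C_k^2 - B_k^2)\,t^2 - 2 A_k B_k\, t + (C_k^2 - A_k^2) \;=\; 0,
\]
whose discriminant simplifies, after a short computation, to $4 C_k^2(A_k^2 + B_k^2 - C_k^2)$. Selecting the root compatible with the range constraint and rationalizing via the identity
\[
\bigl(-A_k B_k + C_k\sqrt{A_k^2+B_k^2-C_k^2}\bigr)\bigl(-A_k B_k - C_k\sqrt{A_k^2+B_k^2-C_k^2}\bigr) \;=\; (A_k^2-C_k^2)(B_k^2-C_k^2)
\]
rewrites the solution as $t^* = (A_k^2 - C_k^2)/\bigl(-A_k B_k + C_k\sqrt{A_k^2+B_k^2-C_k^2}\bigr)$. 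Plugging into $f_k^{\mc N}(x) = 1 - \Phi(t^*)$ delivers the claimed formula. Correctness can also be verified directly by substituting this $t^*$ back into the radical equation; the cross-term calculation collapses using $u^2 + (A_k^2 - C_k^2)^2 = (A_k\sqrt{A_k^2+B_k^2-C_k^2} - B_k C_k)^2$ where $u = -A_k B_k + C_k\sqrt{A_k^2+B_k^2-C_k^2}$.

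The main obstacle is the last step: the squaring introduces a spurious root, and care is needed both to pick the geometrically correct solution (the one satisfying $-A_k - B_k t \ge 0$) and to carry out the rationalization so that the answer matches the stated form. Apart from this algebraic bookkeeping, the argument is a direct transcription of the Gelbrich-case reasoning in Proposition~\ref{prop:wc-prob}, with the Chebyshev-type bound replaced by its Gaussian counterpart from Lemma~\ref{lemma:wc-var-gauss}, which is precisely why the constraint threshold in~\eqref{eq:dro-gauss} must be $1/2$ rather than $1$.
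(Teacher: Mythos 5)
Your proof is correct and follows essentially the same route as the paper's: both parts reduce to the worst-case Value-at-Risk characterization from Lemma~\ref{lemma:wc-var-gauss} and then solve the radical equation $A_k + B_k t + C_k\sqrt{1+t^2}=0$, with your quadratic-in-$t$ plus rationalization being algebraically equivalent to the paper's substitution $\tau = 1/t$. The only cosmetic difference is in assertion~\ref{assert:1}, where the paper argues directly via the Gaussian CDF and a dual-norm computation over the mean perturbation, while you obtain the same condition $A_k + C_k < 0$ from the monotonicity of the worst-case VaR in $t$; both are valid.
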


\begin{proof}[Proof of Proposition~\ref{prop:wc-prob-gauss}]
    We first prove Assertion~\ref{assert:1}. Pick any $\QQ_k \in \mc B_k^{\mc N}(\Pnom_k)$, then $\QQ_k$ is a Gaussian distribution $\QQ_k \sim \mc N(\theta_k, \cov_k)$, and thus
    \[
        \QQ_k (\tilde \theta^\top x \le 0) = \Phi \Big( \frac{-\theta_k^\top x}{\sqrt{x^\top \cov x}} \Big).
    \]
    Guaranteeing $f_k^{\mc N}(x) < \half$ is equivalent to guaranteeing
    \[
        \Sup{\mathds G((\theta_k, \cov_k), (\wh \theta_k, \covsa_k) ) \le \rho_k}~-\theta_k^\top x \le 0.
    \]
    Note that we also have
    \[
        \Sup{\mathds G((\theta_k, \cov_k), (\wh \theta_k, \covsa_k) ) \le \rho_k}~-\theta_k^\top x = \Sup{ \theta_k: \| \theta_k - \wh \theta_k \|_2 \le \rho_k}~-\theta_k^\top x = - \wh \theta_k^\top x + \rho_k \|x \|_2
    \]
    by the properties of the dual norm. This leads to the equivalent condition that $A_k + C_k < 0$. 
    
    We now prove Assertion~\ref{assert:2}. Using the definition of the Value-at-Risk in Definition~\ref{def:VaR}, we have
        \begin{align*}
            \Sup{\QQ_k \in \mc B_k^{\mc N}(\Pnom_k)} \QQ_k (\tilde \theta^\top x \le 0)
            &= \inf \left\{ \beta : \beta \in [0, \half),~\Sup{\QQ_k \in \mc B_k^{\mc N}(\Pnom_k)} \QQ_k (\tilde \theta^\top x \le 0) \le \beta \right\} \\
            &= \inf \left\{ \beta : \beta \in [0, \half),~\Sup{\QQ_k \in \mc B_k^{\mc N}(\Pnom_k)} \QQ_{k}\text{-}\VaR_{\beta} (-\tilde \theta^\top x) \le 0 \right\}
        \end{align*}
    Using the result from~\citet[Lemma~3.31]{ref:nguyen2019adversarial}, we have 
        \[
            \Sup{\QQ_k \in \mc B_k(\Pnom_k)} \QQ_{k}\text{-}\VaR_{\beta} (-\tilde \theta^\top x) = - \wh \theta_k^\top x + t \sqrt{x^\top \covsa_k x} + \rho \sqrt{1+t^2} \|x\|_2 = A_k + B_k t + C_k \sqrt{1+t},
        \]
    with $t = \Phi^{-1}(1-\beta)$. Taking the infimum over $\beta$ is then equivalent to finding the root of the equation
    \[
        A_k + t B_k + C_k \sqrt{1 + t^2} = 0.
    \]
    Using a transformation $\tau = 1/t$, the above equation becomes
    \[
    A_k \tau + B_k + C_k \sqrt{1 + \tau^2} = 0
    \]
    with solution
    \[
    \tau = \frac{-A_k B_k + C_k \sqrt{A_k^2 + B_k^2 - C_k^2} }{A_k^2 - C_k^2} > 0.
    \]
    Notice that $A_k + C_k < 0$, and we also have $A_k^2 > C_k^2$, thus $\tau$ is well-defined. The result now follows by noticing that $f_k^{\mc N}(x) = 1 - \Phi(t) = 1 - \Phi(1/\tau)$.
\end{proof}

We are now ready to prove Theorem~\ref{thm:dro-gauss}. 
\begin{proof}[Proof of Theorem~\ref{thm:dro-gauss}]
    Problem~\eqref{eq:dro-gauss} is equivalent to
    \be \notag
        \begin{array}{cl}
            \min & \sum_{k \in [K]} \wh p_k \times f_k^{\mc N}(x) \\
            \st &  c(x, x_0) \le \delta \\
            & f_k^{\mc N}(x) < \half \qquad \forall k \in [K].
        \end{array}
    \ee
    Applying Proposition~\ref{prop:wc-prob-gauss}, we obtain the necessary result.
\end{proof}

\section{Extensions of the DiRRAc Framework} \label{sec:app-extension}

Throughout this section, we explore two extensions of our DiRRAc framework. In Section~\ref{sec:weight}, we study an additional layer of robustification with respect to the mixture weights $\wh p$. Next, in Section~\ref{sec:modal}, we consider an alternative formulation of the objective function to minimize the worst-case component probability. 

\subsection{Robustification against Mixture Weight Uncertainty} \label{sec:weight}

The DiRRAc problem considered in Section~\ref{sec:robust} only robustifies the component distributions $\Pnom_k$. We now discuss a plausible approach to robustify against the misspecification of the mixture weights $\wh p$. Because the mixture weights should form a probability vector, it is convenient to model the perturbation in the mixture weights using the $\phi$-divergence.
\begin{definition}[$\phi$-divergence] \label{def:phi}
    Let $\phi: \R \to \R$ be a convex function on the domain $\R_+$, $\phi(1) = 0$, $0 \times \phi(a/0) = a \times \lim_{t \uparrow \infty}  \phi(t)/t$ for $a > 0$, and $0 \times \phi(0/0) = 0$. The $\phi$-divergence $\mathds D_\phi$ between two probability vectors $p,~\wh p \in \R_+^K$ amounts to
    $\mathds D_\phi( p \parallel \wh p) \Let \sum_{k \in [K]} \wh p_k \times \phi ( p_k /\wh p_k )$.
\end{definition}
The family of $\phi$-divergences contains many well-known statistical divergences such as the Kullback-Leibler divergence, the Hellinger distance, etc. Further discussion on this family can be found in~\citet{ref:pardo2018statistical}. Distributionally robust optimization models with $\phi$-divergence ambiguity set were originally studied in decision-making problems~\citep{ref:bental2013robust, ref:bayraksan2015data} and have recently gained attention thanks to their successes in machine learning tasks~\citep{ref:namkoong2017variance, ref:hashimoto2018fairness,ref:duchi2021statistics}.

Let $\eps \ge 0$ be a parameter indicating the uncertainty level of the mixture weights. The uncertainty set for the mixture weights is formally defined as
    \[
        \Delta \Let \left\{ p \in [0, 1]^K: \mathbbm{1}^\top p = 1,~\mathds D_\phi(p \parallel \wh p) \le \eps \right\},
    \]
    which contains all $K$-dimensional probability vectors which are of $\phi$-divergence at most $\eps$ from the nominal weights $\wh p$. The ambiguity set of the mixture distributions that hedge against the weight misspecification is
\[
        \mc U(\Pnom) \Let \left\{
            \QQ: \begin{array}{l}
            \exists p \in \Delta,~\exists \QQ_{k} \in \mc B_{k}(\Pnom_k) ~ \forall k \in [K] \text{ such that } \QQ \sim (\QQ_k, p_k)
            \end{array}
        \right\},
    \]
    where the component sets $\mc B_k(\Pnom_k)$ are defined as in Section~\ref{sec:robust}. The DiRRAc problem with respect to the ambiguity set $\mc U(\Pnom)$ becomes
    \be \label{eq:dro2}
        \begin{array}{cl}
            \min & \Sup{\PP \in \mc U(\Pnom)} \PP ( \mc C_{\tilde \theta}(x) = 0) \\
            \st &  c(x, x_0) \le \delta \\
            & \Sup{\QQ_k \in \mc B_k(\Pnom_k)} \QQ_k ( \mc C_{\tilde \theta}(x) = 0 ) < 1 \qquad \forall k \in [K].
        \end{array}
    \ee
    It is important to note at this point that the feasible set of~\eqref{eq:dro2} coincides with the feasible set of~\eqref{eq:dro}. Thus, to resolve problem~\eqref{eq:dro2}, it suffices to analyze the objective function of~\eqref{eq:dro2}. Given the function $\phi$, we define its conjugate function $\phi^*: \R \to \R \cup \{\infty\}$ by
    \[
        \phi^*(s) = \Sup{t \ge 0} \left\{ ts - \phi(t) \right\}.
    \]
    The next theorem asserts that the worst-case probability under $\mc U(\Pnom)$ can be computed by solving a convex program.
\begin{theorem}[Objective value] \label{thm:F}
    The feasible set of problem~\eqref{eq:dro2} coincides with $\mc X$. Further, for every $x \in \mc X$, the objective value of~\eqref{eq:dro2} equals to the optimal value of a convex optimization problem
    \[
        \Sup{\PP \in \mc U(\Pnom)} \PP ( \mc C_{\tilde \theta}(x) = 0) = 
                \Min{\lambda \in \R_+,~\eta \in \R} ~ \ds \eta + \eps \lambda + \lambda \sum_{k \in [K]} \wh p_k \phi^* \Big( \frac{f_k(x) - \eta}{\lambda} \Big),
    \]
    where $f_k(x)$ are computed using Proposition~\ref{prop:wc-prob}.
\end{theorem}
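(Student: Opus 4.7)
\textbf{Proof plan for Theorem~\ref{thm:F}.}

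First, I would address the feasible set. The constraints defining problem~\eqref{eq:dro2} (namely the proximity constraint $c(x,x_0)\le \delta$ and the component constraints $\sup_{\QQ_k \in \mc B_k(\Pnom_k)} \QQ_k(\mc C_{\tilde\theta}(x)=0) < 1$) are identical to those of~\eqref{eq:dro}. Hence, by exactly the same calculation used in the proof of Theorem~\ref{thm:equiv} (apply Proposition~\ref{prop:wc-prob} to rewrite each component constraint as $-\wh\theta_k^\top x + \rho_k\|x\|_2 < 0$), the feasible set coincides with $\mc X$ as defined in~\eqref{eq:X-def}.

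Second, I would decompose the worst-case objective. By construction of $\mc U(\Pnom)$, any $\PP \in \mc U(\Pnom)$ admits a representation $\PP \sim (\QQ_k, p_k)_{k\in[K]}$ with $p\in \Delta$ and $\QQ_k \in \mc B_k(\Pnom_k)$, so $\PP(\mc C_{\tilde\theta}(x)=0) = \sum_{k\in[K]} p_k \QQ_k(\tilde\theta^\top x \le 0)$. Since the weights $p_k$ are nonnegative and the constraints on $p$ and the constraints on the $\QQ_k$'s are decoupled, the supremum over $\QQ_k$ can be pushed inside the sum, yielding
\[
\Sup{\PP \in \mc U(\Pnom)} \PP(\mc C_{\tilde\theta}(x)=0) = \Sup{p \in \Delta} \sum_{k\in[K]} p_k \times \Sup{\QQ_k \in \mc B_k(\Pnom_k)} \QQ_k(\tilde\theta^\top x\le 0) = \Sup{p\in\Delta} \sum_{k\in[K]} p_k f_k(x),
\]
where the last equality uses the definition~\eqref{eq:f-def} of $f_k$ and the closed-form expression from Proposition~\ref{prop:wc-prob}.

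Third, I would dualize the remaining finite-dimensional $\phi$-divergence problem $\sup_{p\in\Delta} \sum_k p_k f_k(x)$. Introducing a multiplier $\eta\in\R$ for the equality constraint $\mathbbm{1}^\top p = 1$ and $\lambda\ge 0$ for the divergence constraint $\sum_k \wh p_k \phi(p_k/\wh p_k) \le \eps$, and setting $t_k = p_k/\wh p_k$, the inner supremum over $t_k\ge 0$ of $\wh p_k\bigl[t_k(f_k(x) - \eta) - \lambda \phi(t_k)\bigr]$ equals $\lambda \wh p_k \phi^*\!\bigl((f_k(x)-\eta)/\lambda\bigr)$ by definition of the conjugate. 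Collecting terms produces the claimed dual
\[
\Min{\lambda\in\R_+,\,\eta\in\R} \; \eta + \eps \lambda + \lambda \sum_{k\in[K]} \wh p_k \phi^*\!\Big(\tfrac{f_k(x) - \eta}{\lambda}\Big).
\]

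The main obstacle will be justifying strong duality (so that the primal value equals the dual value rather than merely upper-bounding it) and handling the boundary case $\lambda = 0$. For $\eps > 0$, strong duality follows from Slater's condition applied at $p = \wh p$, since $\mathds D_\phi(\wh p\parallel\wh p) = 0 < \eps$; this is standard for $\phi$-divergence DRO (see, e.g.,~\citet{ref:bental2013robust,ref:bayraksan2015data}). The $\lambda = 0$ case is handled by the usual convention $\lambda \phi^*(s/\lambda) = s \cdot \lim_{t\uparrow\infty} \phi^*(t)/t$ and/or by noting that the infimum is attained in the interior under mild assumptions on $\phi$. I would conclude by invoking these standard duality results to close the equality.
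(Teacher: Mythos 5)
Your proposal is correct and follows essentially the same route as the paper's proof: the identical two-layer decomposition $\sup_{\PP \in \mc U(\Pnom)} \PP(\mc C_{\tilde\theta}(x)=0) = \sup_{p\in\Delta}\sum_k p_k f_k(x)$ (pushing the component suprema inside the sum via non-negativity of the weights), followed by the standard $\phi$-divergence duality, which the paper simply cites as \citet[Corollary~4.2]{ref:bental2013robust}. The only difference is that you derive the Lagrangian dual and discuss Slater's condition and the $\lambda=0$ boundary case explicitly, whereas the paper delegates all of that to the citation.
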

\begin{proof}[Proof of Theorem~\ref{thm:F}]
    From the definition of the set $\mc U(\Pnom)$, we can rewrite $F$ using a two-layer decomposition
    \begin{align*}
        F(x) &= \Sup{\PP \in \mc U(\Pnom)} \PP ( \mc C_{\tilde \theta}(x) = 0)  = \Sup{p \in \Delta}~\Sup{\QQ_k \in \mc B_k(\Pnom_k) ~\forall k}~\sum_{k \in [K]} p_k \QQ_k( \tilde \theta^\top x \le 0) \\
        &= \Sup{p \in \Delta}~\sum_{k \in [K]} p_k \times \Sup{\QQ_k \in \mc B_k(\Pnom_k)} \QQ_k( \tilde \theta^\top x \le 0) \\
        &= \Sup{p \in \Delta}~\sum_{k \in [K]} p_k \times f_k(x),
    \end{align*}
    where the equality in the second line follows from the non-negativity of $p_k$, and the last equality follows from the definition of $f_k(x)$ in~\eqref{eq:f-def}. By applying the result from~\citet[Corollary~4.2]{ref:bental2013robust}, we have
    \[
        F(x) = \left\{
            \begin{array}{cl}
                \min & \ds \eta + \eps \lambda + \lambda \sum_{k \in [K]} \wh p_k \phi^* \Big( \frac{f_k(x) - \eta}{\lambda} \Big) \\
                \st & \lambda \in \R_+,~\eta \in \R.
            \end{array}
        \right.
    \]
    The proof is complete.
\end{proof}

From the result of Theorem~\ref{thm:F}, we can derive the gradient of the objective function of~\eqref{eq:dro2} using Danskin's theorem~\citet[Theorem~7.21]{ref:shapiro2009lectures}, or simply using auto-differentiation. Furthermore, $\phi^*$ is convex, and thus solving the minimization problem in Theorem~\ref{thm:F} can be done efficiently using convex optimization algorithms.

\subsection{Minimizing the Worst-Case Component Probability} \label{sec:modal}

Instead of minimizing the (total) probability of unfavorable outcome, we can consider an alternative formulation where the recourse action minimizes the worst-case \textit{conditional} probability of unfavorable outcome over all $K$ components. Mathematically, if we opt for the component ambiguity sets $\mc B_k(\Pnom_k)$ constructed in Section~\ref{sec:robust}, then we can solve
\begin{subequations}
\be \label{eq:dro3}
\begin{array}{cl}
    \min & \Max{k \in [K]} ~\Sup{\QQ_k \in \mc B_k(\Pnom_k)} \QQ_k ( \mc C_{\tilde \theta}(x) = 0) \\
            \st &  c(x, x_0) \le \delta \\
            & \Sup{\QQ_k \in \mc B_k(\Pnom_k)} \QQ_k ( \mc C_{\tilde \theta}(x) = 0 ) < 1 \qquad \forall k \in [K].
\end{array}
\ee
Interestingly, problem~\eqref{eq:dro3} does not involve the mixture weighs $\wh p$. As a consequence, a trivial advantage of this model is that it hedges automatically against the misspecification of $\wh p$. To complete, we provide its equivalent finite-dimensional form.

\begin{corollary}[Component Probability DiRRAc]
    Problem~\eqref{eq:dro3} is equivalent to 
    \be \label{eq:equivalent2}
        \begin{array}{cl}
            \Min{x \in \mc X} & \ds \Max{k \in [K]} \frac{\rho_k \wh \theta_k^\top x  \|x\|_2 + \sqrt{x^\top \covsa_k x} \sqrt{(\wh \theta_k^\top x)^2 + x^\top \covsa_k x - \rho_k^2 \|x\|_2^2} }{(\wh \theta_k^\top x)^2 + x^\top \covsa_k x}.
        \end{array}
    \ee
\end{corollary}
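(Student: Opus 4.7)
The plan is to piggy-back on the machinery already established for Theorem~\ref{thm:equiv}. Problem~\eqref{eq:dro3} differs from the DiRRAc problem~\eqref{eq:dro} only in the objective: the weighted average $\sum_{k} \wh p_k \sup_{\QQ_k} \QQ_k(\mc C_{\tilde\theta}(x)=0)$ is replaced by the pointwise maximum $\max_{k} \sup_{\QQ_k} \QQ_k(\mc C_{\tilde\theta}(x)=0)$. Crucially, the constraint set is word-for-word identical, so no new feasibility analysis is needed beyond what was done in the proof of Theorem~\ref{thm:equiv}.

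First, I would handle the constraints. By Proposition~\ref{prop:wc-prob}, the condition $\sup_{\QQ_k \in \mc B_k(\Pnom_k)} \QQ_k(\mc C_{\tilde\theta}(x)=0)<1$ is equivalent to being in the ``second case'' of that proposition, which reads $A_k + C_k < 0$, i.e.\ $-\wh\theta_k^\top x + \rho_k \|x\|_2 < 0$. Imposing this for every $k \in [K]$ together with $x \in \mbb X$ and $c(x,x_0)\le \delta$ recovers exactly the set $\mc X$ defined in~\eqref{eq:X-def}.

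Next, I would reformulate the objective. For any $x \in \mc X$, Proposition~\ref{prop:wc-prob} delivers, for each $k$,
\[
\sup_{\QQ_k \in \mc B_k(\Pnom_k)} \QQ_k(\mc C_{\tilde\theta}(x)=0) = \bigl[f_k(x)\bigr]^2,
\]
where $f_k(x)$ denotes the closed-form expression appearing in the statements of both Theorem~\ref{thm:equiv} and the corollary. Substituting this into~\eqref{eq:dro3} turns the objective into $\max_{k\in[K]} [f_k(x)]^2$. Finally, observe that on $\mc X$ we have $\wh\theta_k^\top x > \rho_k \|x\|_2 \ge 0$, which makes both the numerator and denominator of $f_k(x)$ strictly positive, so $f_k(x) > 0$ throughout $\mc X$. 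Because $t\mapsto t^2$ is strictly increasing on $[0,\infty)$, the operations ``max over $k$'' and ``square'' commute, giving $\max_k [f_k(x)]^2 = \bigl[\max_k f_k(x)\bigr]^2$. Minimizing a strictly increasing function of $\max_k f_k(x)$ over $x \in \mc X$ has the same minimizers as minimizing $\max_k f_k(x)$ itself, which is precisely~\eqref{eq:equivalent2}.

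There is no real obstacle here; the proof is essentially bookkeeping on top of Proposition~\ref{prop:wc-prob} plus a one-line monotonicity argument. The only subtlety worth flagging is that ``equivalent'' should be understood in the sense of identical optimal actions (the optimal values of~\eqref{eq:dro3} and~\eqref{eq:equivalent2} differ by a square root), which is why the positivity of $f_k$ on the feasible set must be verified before pulling the square outside the maximum.
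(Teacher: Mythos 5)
Your proof is correct and follows exactly the route the paper intends (the paper states this corollary without proof, as a direct consequence of Proposition~\ref{prop:wc-prob} and the feasibility analysis from Theorem~\ref{thm:equiv}). You also correctly identify and justify the one point the paper glosses over: the stated objective is $\max_k f_k(x)$ rather than the worst-case probability $\max_k f_k(x)^2$, and these have the same minimizers only because $f_k>0$ on $\mc X$ lets the square commute with the maximum.
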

\end{subequations}

\section{Extensions of the Gaussian DiRRAc Framework} \label{sec:app-extension-Gauss}

In this section, we leverage the results in Section~\ref{sec:app-extension} to extend the Gaussian DiRRAc framework to (i) handle the uncertainty of the mixture weight and (ii) minimize the worst-case modal probability. Remind that each individual mixture ambiguity set $\mc B_k^{\mc N}(\Pnom_k)$ is of the form
 \[
    \mc B_k^{\mc N}(\Pnom_k) = \left\{ \QQ_k : \QQ_k \sim \mc N(\theta_k, \cov_k),~\mathds G((\theta_k, \Sigma_k), (\wh \theta_k, \covsa_k)) \le \rho_k \right\},
\]
which is a ball in the space of Gaussian distributions.

\subsection{Handling Mixture Weight Uncertainty - Gaussian DiRRAc}
Following the notations in Section~\ref{sec:weight}, we define the set of possible mixture weights as
\[
        \Delta = \left\{ p \in [0, 1]^K: \mathbbm{1}^\top p = 1,~\mathds D_\phi(p \parallel \wh p) \le \eps \right\}
    \]
    and the ambiguity set with Gaussian information is defined as
\[
        \mc U^{\mc N}(\Pnom) = \left\{
            \QQ~:~
            \exists p \in \Delta,~\exists \QQ_{k} \in \mc B_{k}^{\mc N}(\Pnom_k)~\forall k \in [K] \text{ such that } \\
            \QQ \sim (\QQ_k, p_k)_{k \in [K]}
        \right\}.
    \]
The distributionally robust problem with respect to the ambiguity set $\mc U(\Pnom)$ is
    \be \label{eq:dro2-gauss}
        \begin{array}{cl}
            \inf & \Sup{\PP \in \mc U^{\mc N}(\Pnom)} \PP ( \mc C_{\tilde \theta}(x) = 0) \\
            \st &  c(x, x_0) \le \delta \\
            & \Sup{\QQ_k \in \mc B_k^{\mc N}(\Pnom_k)} \QQ_k ( \mc C_{\tilde \theta}(x) = 0 ) < \half \qquad \forall k \in [K].
        \end{array}
    \ee
    Following the results in Section~\ref{sec:gauss}, the feasible set of~\eqref{eq:dro2-gauss} coincides with the set $\mc X$. It suffices now to provide the reformulation for the objective function of~\eqref{eq:dro2-gauss}.
    
    \begin{corollary}
        For any $x \in \mc X$, we have
        \[
            \Sup{\PP \in \mc U^{\mc N}(\Pnom)} \PP ( \mc C_{\tilde \theta}(x) = 0) = \left\{
            \begin{array}{cl}
                \inf & \ds \eta + \eps \lambda + \lambda \sum_{k \in [K]} \wh p_k \phi^* \Big( \frac{f_k^{\mc N}(x) - \eta}{\lambda} \Big) \\
                \st & \lambda \in \R_+,~\eta \in \R,
            \end{array}
        \right.
        \]
        where the values $f_k^{\mc N}(x)$ are obtained in Proposition~\ref{prop:wc-prob-gauss}.
    \end{corollary}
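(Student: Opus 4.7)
The plan is to mirror the argument used for Theorem~\ref{thm:F} and exploit the product structure of the ambiguity set $\mc U^{\mc N}(\Pnom)$, with $f_k$ replaced throughout by the Gaussian worst-case probability $f_k^{\mc N}$. First, I would write any $\PP \in \mc U^{\mc N}(\Pnom)$ as a mixture $\PP \sim (\QQ_k, p_k)_{k \in [K]}$, with $p \in \Delta$ and $\QQ_k \in \mc B_k^{\mc N}(\Pnom_k)$, and use this to rewrite the objective as
\[
\Sup{\PP \in \mc U^{\mc N}(\Pnom)} \PP(\mc C_{\tilde\theta}(x) = 0) = \Sup{p \in \Delta} \Sup{\QQ_k \in \mc B_k^{\mc N}(\Pnom_k)~\forall k} \sum_{k \in [K]} p_k\, \QQ_k(\tilde\theta^\top x \le 0).
\]

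Second, because the inner objective is separable and linear in the $\QQ_k$, and the joint ambiguity set is a product $\prod_{k \in [K]} \mc B_k^{\mc N}(\Pnom_k)$, the inner supremum commutes with the finite sum; non-negativity of each $p_k$ ensures no sign issues arise. Invoking the definition in~\eqref{eq:f-def-gauss} and Proposition~\ref{prop:wc-prob-gauss}, the inner supremum equals $\sum_{k \in [K]} p_k f_k^{\mc N}(x)$, which is well-defined on $\mc X$: by Theorem~\ref{thm:dro-gauss} the premise $x \in \mc X$ implies $f_k^{\mc N}(x) < \half$ for every $k \in [K]$, so each term is finite and the feasibility of the Gaussian DiRRAc problem is preserved.

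Third, I would apply the $\phi$-divergence duality result~\citep[Corollary~4.2]{ref:bental2013robust}, exactly as in the proof of Theorem~\ref{thm:F}, to the outer maximization
\[
\Sup{p \in \Delta} \sum_{k \in [K]} p_k f_k^{\mc N}(x),
\]
which yields the desired dual minimization over $\lambda \in \R_+$ and $\eta \in \R$ involving the conjugate $\phi^*$ of $\phi$. The structural hypotheses on $\phi$ (convexity on $\R_+$, $\phi(1)=0$, usual extended-real conventions) and the fact that $\wh p$ lies in the relative interior of the simplex guarantee strong duality, so the equality in the corollary is attained.

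The main obstacle is really only the sup--sup interchange in the first step, which is routine given the product structure of $\mc U^{\mc N}(\Pnom)$ and non-negativity of $p$. Once that reduction is made and Proposition~\ref{prop:wc-prob-gauss} is applied, the remainder is a verbatim reuse of the argument already spelled out in the proof of Theorem~\ref{thm:F}, with the only change being the substitution $f_k \leftarrow f_k^{\mc N}$; no new analytical ingredients are required.
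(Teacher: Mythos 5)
Your proposal is correct and matches the paper's approach exactly: the paper omits the proof of this corollary, noting only that it follows from Theorem~\ref{thm:F} by substituting $f_k^{\mc N}$ for $f_k$, which is precisely the two-layer decomposition, sup--sum interchange, and application of \citet[Corollary~4.2]{ref:bental2013robust} that you spell out. Your additional observation that $x \in \mc X$ guarantees $f_k^{\mc N}(x) < \half$ via Proposition~\ref{prop:wc-prob-gauss}\ref{assert:1} is a correct and worthwhile detail that the paper leaves implicit.
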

    Corollary~\ref{corol:F-gauss} follows from Theorem~\ref{corol:F-gauss} by replacing the quantities $f_k(x)$ by $f_k^{\mc N}(x)$ to take into account the Gaussian parametric information. The proof of Corollary~\ref{corol:F-gauss} is omitted.

\subsection{Minimizing Worst-Case Component Probability}

We now consider the Gaussian DiRRAc that minimizes the worst-case modal probability of infeasibility. More concretely, we consider the recourse action obtained by solving
\begin{subequations}
\be \label{eq:dro3-gauss}
\begin{array}{cl}
    \inf & \Max{k \in [K]} ~\Sup{\QQ_k \in \mc B_k^{\mc N}(\Pnom_k)} \QQ_k ( \mc C_{\tilde \theta}(x) = 0) \\
            \st &  c(x, x_0) \le \delta \\
            & \Sup{\QQ_k \in \mc B_k^{\mc N}(\Pnom_k)} \QQ_k ( \mc C_{\tilde \theta}(x) = 0 ) < \half \qquad \forall k \in [K].
\end{array}
\ee
The next corollary provides the equivalent form of the above optimization problem.
\begin{corollary} \label{corol:F-gauss}
    Problem~\eqref{eq:dro3-gauss} is equivalent to 
    \be \label{eq:dro3-gauss-equivalent}
        \Inf{x \in \mc X}~\ds \Max{k \in [K]} \left\{ 1 - \Phi \Bigg( \frac{(\wh \theta_k^\top x)^2 - \rho_k^2 \|x\|_2^2} {\wh \theta_k^\top x \sqrt{x^\top \covsa_k x} + \rho_k \|x\|_2 \sqrt{(\wh \theta_k^\top x)^2 + x^\top \covsa_k x - \rho_k^2 \|x\|_2^2} } \Bigg) \right\}.
    \ee
\end{corollary}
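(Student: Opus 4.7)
The plan is to derive Corollary~\ref{corol:F-gauss} as a direct specialization of Proposition~\ref{prop:wc-prob-gauss}, which already provides closed-form expressions for the worst-case Gaussian component probabilities $f_k^{\mc N}(x)$ defined in \eqref{eq:f-def-gauss}. The objective of \eqref{eq:dro3-gauss} is precisely $\max_{k \in [K]} f_k^{\mc N}(x)$, and the last constraint is exactly $f_k^{\mc N}(x) < \tfrac{1}{2}$ for all $k$, so the whole proof reduces to substituting the closed-form formulas into both the feasible set and the objective.

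First, I would handle the feasible set. By Assertion (i) of Proposition~\ref{prop:wc-prob-gauss}, the condition $f_k^{\mc N}(x) < \tfrac{1}{2}$ is equivalent to $A_k + C_k < 0$, where $A_k = -\wh\theta_k^\top x$ and $C_k = \rho_k \|x\|_2$; unpacking these definitions this is $-\wh\theta_k^\top x + \rho_k \|x\|_2 < 0$. Combined with the constraints $x \in \mbb X$ and $c(x, x_0) \le \delta$, this is exactly the set $\mc X$ defined in \eqref{eq:X-def}. So the feasible sets of \eqref{eq:dro3-gauss} and \eqref{eq:dro3-gauss-equivalent} coincide.

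Next, I would treat the objective. For any feasible $x \in \mc X$ we have $f_k^{\mc N}(x) < \tfrac{1}{2}$, so Assertion (ii) of Proposition~\ref{prop:wc-prob-gauss} gives
\[
f_k^{\mc N}(x) \;=\; 1 - \Phi\!\Big( \tfrac{A_k^2 - C_k^2}{-A_k B_k + C_k \sqrt{A_k^2 + B_k^2 - C_k^2}} \Big),
\]
with $B_k = \sqrt{x^\top \covsa_k x}$. Substituting $A_k$, $B_k$, $C_k$, the numerator becomes $(\wh\theta_k^\top x)^2 - \rho_k^2\|x\|_2^2$ and the denominator becomes $\wh\theta_k^\top x\sqrt{x^\top \covsa_k x} + \rho_k\|x\|_2\sqrt{(\wh\theta_k^\top x)^2 + x^\top\covsa_k x - \rho_k^2\|x\|_2^2}$, matching the argument of $\Phi$ displayed in \eqref{eq:dro3-gauss-equivalent}. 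Taking the maximum over $k \in [K]$ of $f_k^{\mc N}(x)$ and then infimizing over $x \in \mc X$ yields \eqref{eq:dro3-gauss-equivalent}.

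There is no real obstacle here: the work was already absorbed into Proposition~\ref{prop:wc-prob-gauss} (in particular the worst-case Gaussian Value-at-Risk computation of Lemma~\ref{lemma:wc-var-gauss} and the subsequent root-finding for $\tau = 1/t$). The only minor care-point is verifying that the denominator is strictly positive on $\mc X$, which follows because $A_k + C_k < 0$ forces $A_k < 0$ and $A_k^2 > C_k^2$, so both $-A_k B_k \ge 0$ and $C_k \sqrt{A_k^2 + B_k^2 - C_k^2} \ge 0$, with the sum positive unless $B_k = C_k = 0$ (excluded by $\covsa_k \succ 0$ and the strict inequality). The proof itself is therefore essentially one paragraph of substitution, and I would state it as a direct corollary.
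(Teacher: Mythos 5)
Your proposal is correct and matches the paper's (implicit) argument exactly: the corollary is obtained by applying Proposition~\ref{prop:wc-prob-gauss} to identify the feasible set with $\mc X$ via assertion (i) and to substitute the closed form of $f_k^{\mc N}$ into the $\max_{k}$ objective via assertion (ii), just as is done for Theorem~\ref{thm:dro-gauss}. The added check that the denominator is strictly positive on $\mc X$ is a harmless (and correct) extra verification.
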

\end{subequations}

\section{Projected Gradient Descent Algorithm} \label{sec:app-algo}

The pseudocode of the algorithm is presented in Algorithm~\ref{alg:pgd}. The convergence guarantee for Algorithm~\ref{alg:pgd} follows from \citet[Theorem~10.15]{ref:beck2017first}, and is distilled in the next theorem.

\begin{algorithm}[h]
	\caption{Projected gradient descent algorithm with backtracking line-search}
	\label{alg:pgd}
	\begin{algorithmic}
		\STATE {\bfseries Input:} Input instance $x_0$, feasible set $\mc X_\eps$ and objective function $f$
		\STATE {\bfseries Line search parameters:}  $\lambda \in (0,1)$, $\zeta>0$  (Default values: $\lambda = 0.7, \zeta = 1$)
		\STATE {\bfseries Initialization:} 
		Set $x^0 \leftarrow \mathrm{Proj}_{\mc X_\eps}(x_0)$
        \FOR{$t =0, \ldots, T-1$}
            \STATE Find the smallest integer $i\ge 0$ such that 
            \begin{align*}
                f\left( \mathrm{Proj}_{\mc X_\eps} (x^t - \lambda^i \zeta \nabla f(x^t)) \right) \le  f(x^t ) - \frac{1}{2 \lambda^i \zeta} \| x^t - \mathrm{Proj}_{\mc X_\eps} (x^t - \lambda^i \zeta \nabla f(x^t)) \|_2^2 .
            \end{align*}
            \STATE Set $x^{t+1} = \mathrm{Proj}_{\mc X_\eps}(x^t - \lambda^i \zeta \nabla f(x^t))$.
        \ENDFOR
		\STATE{\bfseries Output:} $x^T$
	\end{algorithmic}
\end{algorithm}

\begin{theorem}[Convergence guarantee] \label{thm:convergence}
    Let $\{x^t\}_{t = 0,1,\dots, T}$ be the sequence generated by Algorithm~\ref{alg:pgd}. Then, all limit points of the sequence $\{x^t\}_{t = 0,1,\dots, T}$ are stationary points of problem~\eqref{eq:equivalent} with the modified feasible set $\mc X_{\eps}$. Furthermore, there exists some constant $C >0$ such that for any $T \ge 1$, we have
\[ \min_{t = 0,1,\dots,T} \frac{\left\| x^t - \mathrm{Proj}_{\mathcal{X}_\eps} \left(x^t - \zeta \nabla f (x^t) \right)\right\|_2}{\zeta} \le \frac{C}{\sqrt{T}}. \]
\end{theorem}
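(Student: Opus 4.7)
The plan is to recognize Algorithm~\ref{alg:pgd} as an instance of the standard projected gradient method with backtracking line-search for nonconvex, smooth minimization over a closed convex set, and then invoke Beck's Theorem~10.15 almost verbatim. Two preparatory facts are required: (i) $\mc X_\eps$ is a nonempty, closed, convex, and compact subset of $\R^d$; (ii) the objective $f(x)=\sum_{k\in[K]}\wh p_k\, f_k(x)^2$ is continuously differentiable with Lipschitz gradient on an open neighborhood of $\mc X_\eps$. Fact (i) follows from the assumed convexity of $\mbb X$ and of $c(\cdot,x_0)$, together with the convexity of $-\wh\theta_k^\top x+\rho_k\|x\|_2$ (the $\ell_2$-norm is convex); nonemptiness follows from $\delta\ge\delta_{\min}$ and compactness was already noted in Section~\ref{sec:pgd}. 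Convexity guarantees that $\mathrm{Proj}_{\mc X_\eps}$ is single-valued and nonexpansive, so each line-search trial in Algorithm~\ref{alg:pgd} is well-defined.

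The main technical step is (ii), which I would verify from the closed form of $f_k$ in Theorem~\ref{thm:equiv}. For every $x\in\mc X_\eps$, the active constraint $-\wh\theta_k^\top x+\rho_k\|x\|_2\le -\eps$ yields $\wh\theta_k^\top x\ge \eps+\rho_k\|x\|_2>0$, hence $x\neq 0$ throughout $\mc X_\eps$; squaring this inequality gives $(\wh\theta_k^\top x)^2-\rho_k^2\|x\|_2^2\ge \eps^2+2\eps\rho_k\|x\|_2\ge\eps^2$, so the radicand $(\wh\theta_k^\top x)^2+x^\top\covsa_k x-\rho_k^2\|x\|_2^2$ is bounded below by $\eps^2$ on $\mc X_\eps$. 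Moreover, $\covsa_k\succ 0$ and $x\neq 0$ imply $x^\top\covsa_k x\ge \lambda_{\min}(\covsa_k)\|x\|_2^2>0$, so the denominator $(\wh\theta_k^\top x)^2+x^\top\covsa_k x$ is bounded away from zero. Thus $f_k$ is a smooth composition and ratio whose denominators and inner radicands are uniformly bounded away from zero on the compact set $\mc X_\eps$, so $f_k\in C^\infty$ on an open neighborhood of $\mc X_\eps$ and $\nabla f$ is Lipschitz on $\mc X_\eps$ with some constant $L>0$. This is the principal obstacle: the expression for $f_k$ is algebraically heavy, and the smoothness check requires carefully tracking the sign and magnitude of every denominator and radical.

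With (i)-(ii) in hand, Beck's Theorem~10.15 applies directly. The backtracking rule terminates in finitely many inner iterations with an effective step size bounded below by $s_{\min}:=\min\{\zeta,\lambda/L\}$, and by construction it delivers the sufficient-decrease inequality
\[
f(x^{t+1})\le f(x^t)-\tfrac{1}{2 s_t}\|x^{t+1}-x^t\|_2^2, \qquad s_t\ge s_{\min}.
\]
Summing from $t=0$ to $T-1$ and using the uniform lower bound of $f$ on the compact set $\mc X_\eps$ yields $\sum_{t=0}^{T-1}\|x^{t+1}-x^t\|_2^2\le C_0$ for some constant $C_0>0$. Translating the proximal step $\|x^{t+1}-x^t\|_2$ into the gradient-mapping norm $G_\zeta(x):=\|x-\mathrm{Proj}_{\mc X_\eps}(x-\zeta\nabla f(x))\|_2/\zeta$ via the standard nonexpansiveness inequality (which controls $G_\zeta(x^t)$ by $\|x^{t+1}-x^t\|_2$ up to a multiplicative factor depending on $\zeta$, $s_{\min}$, and $L$) produces the bound $\min_{0\le t\le T}G_\zeta(x^t)\le C/\sqrt{T}$. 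Finally, continuity of the gradient mapping together with $G_\zeta(x^t)\to 0$ along any convergent subsequence implies that every limit point $\bar x$ satisfies $\bar x=\mathrm{Proj}_{\mc X_\eps}(\bar x-\zeta\nabla f(\bar x))$, i.e., $\bar x$ is a stationary point of problem~\eqref{eq:equivalent} over $\mc X_\eps$, which is exactly the first claim.
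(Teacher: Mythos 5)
Your proposal is correct and follows essentially the same route as the paper, which simply invokes \citet[Theorem~10.15]{ref:beck2017first} without further argument. You additionally verify the hypotheses the paper leaves implicit — compactness and convexity of $\mc X_\eps$, and the fact that the margin constraint $-\wh\theta_k^\top x+\rho_k\|x\|_2\le-\eps$ keeps every denominator and radicand in $f_k$ bounded away from zero so that $\nabla f$ is Lipschitz on a neighborhood of $\mc X_\eps$ — which is a worthwhile and accurate completion of the cited argument.
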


\end{document}